\newtheorem{theorem}{Theorem}
\newtheorem{lemma}[theorem]{Lemma}
\newtheorem{corollary}[theorem]{Corollary}
\newtheorem{definition}[theorem]{Definition}
\newcommand{\oea}{$(1 + 1)$~EA\xspace}
\newcommand{\NSGA}{\mbox{NSGA-II}\xspace}
\newcommand{\NSGAT}{\mbox{NSGA-III}\xspace}
\newcommand{\SMS}{\mbox{SMS-EMOA}\xspace}
\newcommand{\om}{\textsc{OneMax}\xspace}
\newcommand{\jump}{\textsc{Jump}\xspace}
\newcommand{\omm}{\textsc{OneMaxMin}\xspace}
\newcommand{\omim}{\textsc{OneMinMax}\xspace}
\newcommand{\onemax}{\om}
\newcommand{\lo}{\textsc{LeadingOnes}\xspace}
\newcommand{\lotz}{\textsc{LOTZ}\xspace}
\newcommand{\cocz}{\textsc{COCZ}\xspace}
\newcommand{\ojzj}{\textsc{OJZJ}\xspace}
\newcommand{\dlb}{\textsc{DLB}\xspace}
\newcommand{\dltb}{\textsc{DLTB}\xspace}
\newcommand{\wlptno}{\textsc{wlptno}\xspace}
\newcommand{\OO}{\textsc{OM}\xspace}
\newcommand{\LO}{\textsc{LO}\xspace}
\newcommand{\TZ}{\textsc{TZ}\xspace}
\newcommand{\LFC}{\textsc{LFC}\xspace}
\newcommand{\RR}{\textsc{RR}\xspace}
\newcommand{\uRR}{\textsc{uRR}\xspace}
\newcommand{\otzt}{\textsc{OneTrapZeroTrap}\xspace}
\newcommand{\trap}{\textsc{Trap}\xspace}
\newcommand{\R}{\ensuremath{\mathbb{R}}}
\newcommand{\N}{\ensuremath{\mathbb{N}}} 
\newcommand{\Z}{\ensuremath{\mathbb{Z}}}
\DeclareMathOperator{\cDis}{cDis}
\DeclareMathOperator{\SurSel}{SurvivalSelection}
\DeclareMathOperator{\HV}{HV}
\newcommand{\eps}{\varepsilon}
\begin{document}
\sloppy
\date{}
\title{Proven Advantage of Multiobjective Evolutionary Algorithms for Problems with Different Degrees of Conflict}
\author{Weijie Zheng\thanks{School of Computer Science and Technology, International Research Institute for Artificial Intelligence, Harbin Institute of Technology, Shenzhen, China}}
\maketitle

\begin{abstract}
The field of multiobjective evolutionary algorithms (MOEAs) often emphasizes its popularity for optimization problems with conflicting objectives. However, it is still theoretically unknown how MOEAs perform compared with typical approaches outside this field.

This paper conducts such a systematic theoretical comparison on problem classes with different degrees of conflict. With $\omm_k$ depicting $k\in[0..n]$ degrees of conflict, we show the difficulties of two typical non-MOEA approaches,  {the} scalarization (weighted-sum) and  {the} $\eps-$constraint approach. We prove that for any set of weights, the set of optima formed by  {the} scalarization approach cannot cover its full Pareto front for $k>2$. Although constrained problems constructed from $\eps-$constraint approach ensure the full coverage, general ways (via exterior or nonparameter penalty functions) to solve these constrained problems encounter difficulties. The nonparameter penalty function way cannot guarantee the full coverage, and the exterior way covers the Pareto front with expected $O(\max\{k,1\}n\ln n)$ number of function evaluations, but only with careful settings of $\eps$ and $r$ ($r>1/(\eps+1-\lceil \eps \rceil)$).

In contrast, MOEAs efficiently solve $\omm_k$ without careful designs. We prove the same expected runtime of $O(\max\{k,1\}n\ln n)$ for the (G)SEMO, MOEA/D, \NSGA, and \SMS. 

Our brief discussions on a bi-objective \lo variant with different degrees of conflict show similar findings.
\end{abstract}

\section{Introduction}
Many real-world applications have multiple objectives to optimize simultaneously, and multiobjective evolutionary algorithms (\emph{MOEAs}) are widely utilized to solve them. The theory community  {mainly} uses the mathematical runtime as a tool to evaluate the performance of MOEAs rigorously, and to understand the working principles better. The runtime analysis of MOEAs starts from seminal papers by Laumanns et al.~\cite{LaumannsTZWD02,LaumannsTZ04} and has made the breakthrough from toy algorithms (GSEMO) to practical algorithms like the \NSGA~\cite{ZhengLD22,ZhengD23aij} and its non-dominated sorting variants (\NSGAT~\cite{WiethegerD23,OprisDNS24}, \SMS~\cite{BianZLQ25}) as well as the MOEA/D~\cite{LiZZZ16}. However, there are two fundamental questions that are still not well answered from the theoretical side.  

As is known, MOEAs are not the only way to solve multiobjective optimization problems. Typical alternatives include scalarization (weighted-sum) and the $\eps$-constraint method.
The scalarization method combines all objectives
into an integrated one via assigning weights to each objective, and then solves this single-objective optimization problem. The $\eps$-constraint method selects one objective to optimize and converts
remaining objectives into inequality constraints (with constants $\eps_1,\eps_2,\dots$ as corresponding lower bounds for maximization problems), and then solves this constrained problem. However, the existing results for multiobjective optimization in our theory community mainly focus on the runtime of the MOEAs. The systematic theoretical comparison between MOEAs and typical non-MOEA ways is still not clear.

The field of MOEAs often emphasizes its popularity for optimization problems with conflicting objectives. The objectives can have different degrees of conflict when the decision variables take different values. For example, delivery personnel are required to transport spill-prone liquid foods, like coffee, soups, and congee, to customers rapidly and without leakage or quality deterioration. On high-quality road segments, delivery-time minimization and quality deterioration minimization are mutually non-conflicting objectives. On poor road segments, the two objectives conflict: detours maintain absence of spillage but lengthen travel time, whereas direct routes shorten time at the cost of increased spillage. There is a natural question about the relationship between the performance of the MOEAs and the conflict degree of the objectives, compared with other non-MOEA ways discussed before. Even a basic question exists about how MOEAs behave for problems with no conflict, compared to single-objective optimization  {algorithms}.

This paper takes such a step to tackle the above two questions. We resort to the well-analyzed \omim and \cocz benchmarks where the  {former} considers the extreme conflict between two objectives, and  {the} two objectives  {of} the  {latter} conflict in half  {of the} bit positions. We consider a generalized class called $\omm_k$, where $k$ is the number of bits in which two objectives conflict, and we call it the degree of conflict. This benchmark class includes \omim~for $k=n$ ($n$ the problem size), $k=n/2$ for \cocz, and $k=0$ for no conflict. 
We note that a similar function was independently proposed in~\cite{AntipovKR24}, and we will discuss its details in Sections~\ref{ssec:bench} and~\ref{sec:ommk}.

For this class of problems, we will show that two typical non-MOEA approaches, scalarization (weighted-sum) and $\eps$-constraint approach, are difficult or inconvenient to use. We prove that for any set of weights, the set of optima (one optimum corresponds to one constructed single-objective problem) found by the scalarization approach cannot cover the full Pareto front of $\omm_k$ with $k>2$ (Theorem~\ref{thm:ws}). Although the optima set of constrained problems constructed via $\eps$-constraint approach can cover the full Pareto front, the commonly used ways (via exterior or nonparameter penalty functions) to solve such constrained problems encounter difficulties. The penalty function approach transforms the constrained problem into an unconstrained one. The nonparameter penalty function approach
adds the unscaled violation to the objective when a constraint is violated, leaving it unchanged otherwise, and the exterior penalty function performs the same augmentation, but first scales the violation by a positive penalty coefficient.
We will prove that the nonparameter penalty function way cannot guarantee the full coverage, and the exterior way helps (with expected runtime of $O(n\ln n)$ for the randomized local search algorithm for reaching any Pareto front point, see Theorem~\ref{thm:rls}) but with careful settings of $\eps$ and $r$ ($r>1/(\eps+1-\lceil \eps \rceil)$) (Corollaries~\ref{cor:r01} and~\ref{cor:cor}). Specifically, since the Pareto front size is $k+1$, $k+1$ different settings of $\eps$ are needed for the exterior way to achieve full coverage of the Pareto front, which results in a $O(\max\{k,1\} n\ln n)$ runtime for the randomized local search algorithm.

In contrast to the careful designs  {required in} in the above-mentioned non-MOEA approaches, we will prove that in the typical settings, the popular toy MOEAs (the (G)SEMO), and the practical ones (the MOEA/D, the \NSGA, and the \SMS) cover the full Pareto front of $\omm_k$ in expected $O(\max\{k,1\}n\ln n)$ function evaluations (Theorems~\ref{thm:moead} and~\ref{thm:moeas}), same as the the exterior way. In particular, for solving the problem with two identical objectives ($k=0$), the MOEAs will achieve the same runtime as the counterpart RLS or \oea for single-objective optimization. 

These results for the MOEAs hold for the following natural parameter settings. The result for the (G)SEMO does not have specific settings. The ones for the practical ones hold when the population size used in the \NSGA and the \SMS is $\Theta(k+1)$, the reference point used in \SMS is dominated by all Pareto optimal solutions, and the number of subproblems in MOEA/D is set to be the size of the Pareto front. Increasing population size is a generally used strategy (although the recent theoretical result~\cite{ZhengD24tec} shows its inefficiency for more than two objectives on \omim variants), and a good runtime ($O(Nn\log n)$ for the \NSGA, and $O(\mu n\log n)$ for the \SMS where $N$ and $\mu$ are population sizes respectively) is still reached for a reasonably large population size. The restriction on the reference point is not that difficult to fulfill if one chooses a vector with small enough components for the maximization problem. Although the requirement on the number of subproblems in the MOEA/D seems strict, the very recent work~\cite{DoerrKW24} shows that starting from the coverage of a subset of the Pareto front, the MOEA/D still covers the full Pareto front in a reasonable runtime for the \omim problem. Now we compare them with parameters used in the efficient exterior way to solve the constrained problems constructed from the $\eps$-constraint approach, which also achieves an efficient performance. With respect to $k$ degrees of conflict, this method needs to construct a specific set of $\eps$ parameters (with size at least $k+1$) in a good distribution, while the \NSGA and the \SMS only require their population size at least $4(k+1)$ and $k+1$ respectively, and the MOEA/D requires the number of subproblems being $k+1$. Besides, this method also needs another positive penalty parameter $r$ larger than $1/(\eps+1-\lceil \eps \rceil)$.

To see the generality of the above findings, we will have brief discussions on a bi-objective \lo variant with different degrees of conflict, called $\lotz_k$ with $k$ for the conflict degree. The difficulty of the scalarization and the inconvenience of the $\eps$-constraint method are observed as well. The MOEAs ((G)SEMO, \NSGA, and \SMS) will be proved to have the same runtime of $O(\max\{k,1\}n^2)$ as the $\eps$-constraint method solved by the exterior penalty way with proper coefficients.

The remainder of this paper is organized as follows. Section~\ref{sec:pre} introduces the preliminaries of bi-objective pseudo-Boolean optimization and a brief review of the bi-objective benchmarks that are commonly used in the theory community. The $\omm_k$ benchmark class is in Section~\ref{sec:ommk} to depict the different degrees of conflict between two objectives. Sections~\ref{sec:typ} and~\ref{sec:eps} show the difficulties or inconvenience of two typical non-MOEA approaches, and the theoretical efficiency of commonly analyzed MOEAs is discussed in Section~\ref{sec:moeas}. The performance comparison on a bi-objective \lo variant is briefly discussed in Section~\ref{sec:lotz}. Section~\ref{sec:con} concludes this paper.

\section{Preliminaries}\label{sec:pre}
Following the standard notation, for $a,b\in \Z$ and $a\le b$, we write $[a..b]$ to denote $[a,b] \cap \Z$. For $x=(x_1,\dots,x_n)\in\{0,1\}^n$, we use $x_{[a..b]}$ to denote $(x_a,\dots,x_b)$, use $|x|_1$ to denote the number of ones in $x$, and abbreviate a sub-bitstring $(1,\dots,1)$ with the length of $k$ by $1^k$.  {Besides, for any two vectors $u=(u_1,\dots,u_\ell),v=(v_1,\dots,v_\ell)\in\R^{\ell}$ with $\ell\in \N$, we say $u>(\ge,<,\le,=)~v$ if $u_i>(\ge,<,\le,=)~v_i$ for all $i\in[1..\ell]$.}

\subsection{Bi-objective Pseudo-Boolean Optimization}\label{ssec:bpo}
This work only considers bi-objective pseudo-Boolean optimization (maximization). That is, we consider to maximize $f=(f_1,f_2): \{0,1\}^n\rightarrow \R^2$. Different from single-objective optimization, not all solutions in multiobjective optimization are comparable. We say $x$ \emph{weakly dominates} $y$, denoted by $x\succeq y$, if and only if $f_i(x) \ge f_i(y)$ for all $i\in\{1,2\}$. We say $x$ \emph{dominates} $y$, denoted by $x\succ y$, if and only if $f_i(x) \ge f_i(y)$ for all $i\in\{1,2\}$ and at least one of the inequalities is strict. We say $x$ and $y$ incomparable if and only if neither $x\succeq y$ nor $y\succeq x$. If $x$ cannot be dominated by any other solutions, then we say $x$ is \emph{Pareto optimal}, and all Pareto optimal solutions form the \emph{Pareto set}. The set of the function values of the Pareto set is called the \emph{Pareto front}, and we call a vector on the Pareto front \emph{a Pareto front point}. The \emph{runtime} commonly discussed in the evolutionary theory community is the number of iterations or function evaluations to reach a predefined goal, like the full Pareto front coverage, that is, to reach a set of solutions whose function values contain all Pareto front points.

\subsection{Literature Review on Bi-objective Benchmarks Used For Theoretical Analysis}\label{ssec:bench}
The popular bi-objective benchmarks are the counterparts of the well-analyzed single-objective problems. Based on the classic single-objective \onemax benchmark that counts the number of ones in the bit string, the bi-objective \cocz~\cite{LaumannsTZWD02} has \onemax as the first objective, and the second objective aligns with the first in the first half of the
bit string but conflicts with it in the second half.
Another counterpart \omim~\cite{GielL10} has \onemax as one objective and the other objective that conflicts in all bits (that is, this objective is \onemax w.r.t. $\bar{x}=1-x$). For the classic single-objective \lo benchmark, the bi-objective \lotz~\cite{LaumannsTZ04} contains \lo as one objective and the other objective of a variant of \lo w.r.t. $\bar{x}=1-x$ but calculates from right to left. \wlptno~\cite{QianYZ13} generalizes \lotz~by shifting the search space from $\{0,1\}^n$ to $\{-1,1\}^n$ and simultaneously maximizing the leading positive ones and the trailing negative ones with weights. The popular multimodal \jump class was used to be one objective in \ojzj~\cite{ZhengD23ecj}, and \ojzj has the other objective of \jump w.r.t. $\bar{x}=1-x$. \dltb~\cite{ZhengLDD24} is the bi-objective counterpart of multimodal \dlb (a deceptive version of \lo where the critical block (two positions) prefers the deceptive value of $00$ instead of $11$ before the optimal $1^n$ is reached) and is constructed in a similar way as \lotz. These bi-objective benchmarks are well and intensively studied in the theory community~\cite{Giel03,BrockhoffFN08,DoerrKV13,DoerrGN16,BianQT18ijcaigeneral,HuangZCH19,HuangZ20,OsunaGNS20,ZhengLD22,BianQ22,ZhengD23aij,ZhengD25approx}. Some artificial benchmarks are also constructed for specific aims. \RR~and \uRR~\cite{DangOS24} inspired by the single-objective royal road functions are designed to demonstrate the possible exponential speed-up of employing the crossover operator. \otzt~\cite{DangOS24gecco} is the bi-objective counterpart of the \trap~benchmark, with the first objective of the original trap and the other objective of \trap~w.r.t. $\bar{x}=1-x$.  

From the construction of these benchmarks, we know that the conflict of two objectives exists and usually in an extreme manner. 
Very recently, independent of our arXiv version of this work~\cite{Zheng24}, Antipov, K\"otzing, and Radhakrishnan~\cite{AntipovKR24} generalized the \cocz and \omim to the minimization of $(\OO_a,\OO_b)$ where $a$ and $b$ are two optimal bitstrings w.r.t. two objectives respectively, and $\OO_s(x)$ calculates the Hamming distance between $x$ and $s$ for $s\in\{a,b\}$. Here the Hamming distance between $a$ and $b$ can be regarded as the degree of conflicts. 
They also proved the expected runtime of $O(kn\ln n)$ for (G)SEMO, which partially answered our second question on the relationship between the performance of the (G)SEMO and the conflict degrees of objectives. However, our main question about the systematic comparison between typical non-MOEAs and MOEAs (including practical ones) is still not answered. Besides, they also considered a linear function class denoted by \LFC where an offset is added and each bit is accompanied with a weight, and called the bit conflicted for two \LFC with different weights if two corresponding weights are non-zero and have different signs. For the bi-objective problems constructed with \LFC, no relevant theoretical results are given. 


\section{$\omm_k$}\label{sec:ommk}
In this section, we will introduce the bi-objective benchmark class depicting the different degrees of the conflict between two objectives.

As discussed in Section~\ref{ssec:bench}, {independent of ours,} Antipov, K\"otzing, and Radhakrishnan~\cite{AntipovKR24} considered different degrees of conflict for two objectives, generalized from \cocz and \omim. Formally, they consider the minimization of 
\begin{align}
    (\OO_a,\OO_b),
    \label{eq:omc}
\end{align}
where $a,b\in\{0,1\}^n$, and the function $\OO_d: \{0,1\}^n \rightarrow \R$ is defined by $\OO_d(x)={H(d,x)}$, that is, the Hamming distance between two bitstrings $d$ and $x$. For this problem, $k=H(a,b)$ can be regarded as the degree of conflict. We omit the introduction about a more general conflict defined on the generalized linear function class \LFC, as no theoretical results are given for this kind and our results will focus on more intuitive and easy problems but conjecture our findings will be helpful for the future analysis.

To better fit into the structures of \cocz and \omim, we now introduce the essentially identical but more intuitive benchmark class, comparing to (\ref{eq:omc}). We note that the obtained results in this work can be easily applied to the minimization of (\ref{eq:omc}). 
Following the similar structure in \cocz, the following benchmark class makes the first and the second objectives concur in the first $n-k$ bit positions and conflict in the last $k$ bit positions. $k$ is the degree of conflict between two objectives. See Definition~\ref{def:ommk}.
\begin{definition}
The $\omm_k$ function $\{0,1\}^n \rightarrow \R^2$ is defined by
\begin{equation}
\begin{split}
    f^k(x)&{}={}(f^k_1(x),f^k_2(x))
={}\left(\sum_{j=1}^n x_j, \left(\sum_{j=1}^{n-k} x_j\right) + \left(\sum_{j=n-k+1}^n 1- x_j \right)\right)
\end{split}
\label{eq:coczk}
\end{equation}
for $x=(x_1,\dots,x_n)\in\{0,1\}^n$.
\label{def:ommk}
\end{definition}
We easily see that it includes \cocz for $k=n/2$, and \omim (swapping the orders of two objectives) for $k=n$. 
{The following lemma shows the Pareto front for $\omm_k$.}
{
\begin{lemma}
Let $M$ denote the Pareto front of $\omm_k$. Then
\begin{align*}
    M=\{(n-k+i,n-i) \mid i\in[0..k]\}.
\end{align*}
\label{lem:PF}
\end{lemma}
}
\begin{proof}
{We first show that $y\not\succ x$ for any $x$ with $f^k(x)\in M$ and for any $y\in\{0,1\}^n$. If $y\succ x$, from the definition of dominance, we know 
\begin{align}
f^k_1(y)+f^k_2(y) > f^k_1(x)+f^k_2(x).
\label{eq:yx1}
\end{align}
From (\ref{eq:coczk}) we have
\begin{align*}
    f^k_1(y)+f^k_2(y)=2|y_{[1..n-k]}|_1+k\le 2(n-k)+k=2n-k.
\end{align*} 
Noting that $f^k_1(x)+f^k_2(x)=2n-k$ from the definition of $M$, we know
\begin{align*}
f^k_1(y)+f^k_2(y) \le f^k_1(x)+f^k_2(x),
\end{align*}
which contradicts (\ref{eq:yx1}). Then $y\not\succ x$.}

{Now it remains to prove that for any $y\in \{0,1\}^n$ with $f^k(y)\notin M$, there exists $z\in\{0,1\}^n$ such that $z\succ y$.
From $f^k(y)\notin M$, we easily  {see} $y_{[1..n-k]}\neq 1^{n-k}$. Let $z$ with $z_{[1..n-k]}=1^{n-k}$ and $z_{[n-k+1..n]}=y_{[n-k+1..n]}$. Then $f_1^k(z) > f_1^k(y)$ and $f_2^k(z)>f_2^k(y)$. Thus $z\succ y$.} 
\end{proof}
In this work, we will discuss the runtime (number of function evaluations or iterations) that the algorithms need to cover the full Pareto front.

The following lemma calculates the maximal number of incomparable solutions, which will be used in the later sections.
\begin{lemma}
The maximal number of pairwise non-dominated function values w.r.t. $\omm_k$ is $k+1$.
\label{lem:popsize}
\end{lemma}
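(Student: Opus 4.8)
The plan is to read ``the maximal number of non-dominated function values'' as the size of the largest \emph{antichain} in the objective space, i.e.\ the largest set of pairwise incomparable values attained by $f^k$, and to prove matching upper and lower bounds of $k+1$. The first step is to collapse the two-dimensional dominance structure onto a single invariant. Writing $a=\sum_{j=1}^{n-k}x_j$ and $b=\sum_{j=n-k+1}^{n}x_j$, every function value takes the form $f^k(x)=(a+b,\,a+k-b)$, so it is determined by the pair $(a,b)$ with $a\in[0..n-k]$ and $b\in[0..k]$. Equivalently, I would track the sum $s(x)=f^k_1(x)+f^k_2(x)=2a+k$ and the difference $d(x)=f^k_1(x)-f^k_2(x)=2b-k$, which together encode the value.

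For the upper bound, the key claim is that any two \emph{incomparable} values have distinct differences $d$. Indeed, if $d(x)=d(y)$ then $f^k_1(x)-f^k_1(y)=f^k_2(x)-f^k_2(y)$, so both coordinatewise differences share a common sign; whichever value has the larger first coordinate then weakly dominates the other, contradicting incomparability. Since $b$ ranges over $[0..k]$, the difference $d=2b-k$ attains only the $k+1$ values $\{-k,-k+2,\dots,k\}$. Hence the map $x\mapsto d(x)$ is injective on any antichain, and every antichain has at most $k+1$ elements.

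For the lower bound I would exhibit an explicit antichain of size $k+1$, for which the Pareto front is the natural witness. The front $\{(n-k+i,\,n-i):i\in[0..k]\}$ consists of $k+1$ values, and as $i$ increases the first coordinate strictly increases while the second strictly decreases, so no one of them weakly dominates another; they form an antichain of size $k+1$. Combining this with the upper bound yields exactly $k+1$.

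I do not expect a genuine obstacle here: the only real insight is recognizing that the dominance order on $f^k$ degenerates to the single scalar invariant $d=f^k_1-f^k_2$, after which both bounds are immediate. The one point I would be careful about is the intended reading of the statement: ``non-dominated function values'' should be the maximum size of a set of pairwise non-dominated values (the relevant quantity for bounding MOEA population sizes), not merely the Pareto-front size. The upper-bound argument must therefore rule out \emph{arbitrary} large antichains rather than only examining the front, which is precisely what the injectivity of $d$ on antichains accomplishes; that these two quantities happen to coincide at $k+1$ for $\omm_k$ is then a consequence, not an assumption.
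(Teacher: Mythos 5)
Your proof is correct, and it rests on the same invariant as the paper's: every value of $f^k$ has the form $(a+b,\,a+k-b)$ with $b=|x_{[n-k+1..n]}|\in[0..k]$, so pairwise non-dominated values are separated by $b$, equivalently by your difference $d=f^k_1-f^k_2=2b-k$. The execution differs in a way worth noting. The paper anchors at the value $(u_m,v_m)$ that is lexicographically maximal in the population, derives from the two strict inequalities that any value incomparable to it has strictly smaller $b$, and concludes there are ``at most $k$ function values not dominated by $(u_m,v_m)$.'' Read literally, that last step already needs the fact you prove explicitly --- that on a set of \emph{pairwise} incomparable values the map to $b$ (or $d$) is injective --- because arbitrarily many values with equal $b$ but different $a$ can each be incomparable to the anchor (for instance, with $k=2$ both $(2,4)$ and $(1,3)$ are incomparable to $(4,2)$); what excludes them is that they dominate one another, a pairwise consideration the paper leaves implicit. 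Your injectivity-on-antichains formulation is exactly the statement required, so your upper bound is tighter in its logic. You also supply the matching lower bound by exhibiting the Pareto front $\{(n-k+i,\,n-i)\mid i\in[0..k]\}$ as an antichain of size $k+1$, which the paper omits (it only notes the Pareto front size earlier in the text); this is what turns the upper bound of $k+1$ into the claimed ``maximal number.''
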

\begin{proof}
{Let $V$ be the set of mutually incomparable solutions. We will show that for any incomparable $x,y\in V$, $|x_{[n-k+1..n]}|_1\ne|y_{[n-k+1..n]}|_1$. If $|x_{[n-k+1..n]}|_1=|y_{[n-k+1..n]}|_1$, noting that \begin{equation*}
\begin{cases}
    f^k(x)=(|x_{[1..n-k]}|_1+|x_{[n-k+1..n]}|_1,|x_{[1..n-k]}|_1+k-|x_{[n-k+1..n]}|_1)\\
    f^k(y)=(|y_{[1..n-k]}|_1+|y_{[n-k+1..n]}|_1,|y_{[1..n-k]}|_1+k-|y_{[n-k+1..n]}|_1),
\end{cases}
\end{equation*}
we have that
\begin{itemize}
    \item if $|x_{[1..n-k]}|_1\ge|y_{[1..n-k]}|_1$, then $f^k(x)\ge f^k(y)$, thus $x \succeq y$;
    \item if $|x_{[1..n-k]}|_1<|y_{[1..n-k]}|_1$, then $f^k(x)<f^k(y)$, thus $y \succ x$,
\end{itemize}
which contradicts that $x$ and $y$ are incomparable. Since $|x_{[n-k+1..n]}|_1\in[0..k]$, we have $|V|\le k+1$. }
With the Pareto front size of $k+1$ {from Lemma~\ref{lem:PF}}, this lemma is proved.
\end{proof}

\section{Difficulty of the Scalarization Approach}\label{sec:typ}
In this section and the next section, we will discuss how the typical non-MOEA approaches (scalarization and $\eps$-constraints) behave to different degrees of the objectives, that is, how they solve the $\omm_k$ defined in Section~\ref{sec:ommk}. We will see the difficulties or the inconveniences of these two approaches.

\subsection{Scalarization (Weighted-Sum)}
One well-known approach to tackle multiobjective optimization is to use scalarization (or called weighted-sum approach) to combine all objectives into an integrated one via assigning weights to each~\cite{CaramiaD20}. Formally, the scalarization will reformulate the maximization of the original bi-objective $\omm_k$ (Definition~\ref{def:ommk}) to the maximization of the following function.
\begin{definition}
Let $w\in\R$. The scalarization function $f_w^k:\{0,1\}^n\rightarrow \R$ of $\omm_k$ is defined by
   \begin{equation}
   \begin{split}
&{}f^k_w(x)={}wf^k_1(x)+(1-w)f^k_2(x)\\
&{}={}\left(\sum_{j=1}^{n-k} x_j \right) +k(1-w)+(2w-1)\left(\sum_{j=n-k+1}^n x_j \right)
\end{split}
\label{eq:ws}
\end{equation} 
for $x=(x_1,\dots,x_n)\in\{0,1\}^n$.
\end{definition}
Obviously, the optimum is $1^n$ for $2w-1>0$, and $1^{n-k}0^{k}$ for $2w-1<0$. For $2w-1=0$, the optima are $1^{n-k}*$ with any $*\in\{0,1\}^{k}$.

\subsection{Solving the Constructed Problems}
Usually, an algorithm for one single-objective problem only provides one solution. Here we assume that the algorithm will be terminated when one Pareto optimum is reached. Then one problem contributes to at most one Pareto front point (it depends on whether this Pareto front point has already been reached or not). Hence, to cover the full Pareto front, we need to carefully select a set of weights $w$ to construct the minimal number of problems (and also need to carefully select corresponding algorithms).\footnote{{Note here that a sequence of solutions, say $x_{(w)}^1,\dots,x_{(w)}^{T_w}$, will be generated in the optimization process of each single-objective problem w.r.t. $w$. For a set of weights $\{w_1,\dots,w_K\}$ for some $K\in \N$, this work only consider whether the set of final returned solutions $\left\{x_{(w_1)}^{T_{w_1}},\dots,x_{(w_K)}^{T_{w_K}}\right\}$, instead of the historically generated ones $\left\{x_{(w_1)}^1,\dots,x_{(w_1)}^{T_{w_1}},\dots,x_{(w_K)}^1,\dots,x_{(w_K)}^{T_{w_K}}\right\}$, covers the full Pareto front.} }
However, the suitable choices of weights are not known beforehand. We point out a worse situation in the following theorem that even using an infinite number of weights and different algorithms, the weighted-sum approach cannot fully cover the Pareto front of $\omm_k$. {We note that from the proof of the following theorem, we see a positive chance for a coverage of the Pareto front when restarting $w=1/2$ for many times. However, since one cannot easily have the prior knowledge of repeating a specific weight many times (and how many times should we repeat is not well answered as well), we will not discuss the possibility of this impractical case in this work.} 
\begin{theorem}
Let {$M$ be defined as in Lemma~\ref{lem:PF} and} $k>2$. Let {$S \subset \R$ be a set of $w$, and $x_w$ be one global optimum of $f^k_w$.}
{Let $F=\{f^k(x_{w})\mid w\in S\}$. Then $M \not\subset F$}.
\label{thm:ws}
\end{theorem}
\begin{proof}
{If there is $w\in S$ with $2w-1>0$, then as discussed before, $x_w=1^n$ and thus $(n,n-k)\in F$. Similarly, if there is $w\in S$ with $2w-1<0$, then $x_w=1^{n-k}0^k$ and thus $(n-k,n)\in F$. If $w=1/2\in S$, then $x_{1/2}\in\{1^{n-k}*\mid *\in\{0,1\}^k\}$. Since only one optimal solution is picked for $f^k_{1/2}$, we know that $x_{1/2}$ will contribute at most one more Pareto front point into $F$, which depends on whether $x_{1/2}\in\{1^n,1^{n-k}0^k\}$ or not.}
Hence, for any $S$, maximizing $f^k_{w\in S}$ reaches at most three Pareto front points, which is less than {$|M|=k+1$}. 
\end{proof}

We note that the drawback of the weighted-sum approach has been already recognized for a long time. For example, in 1997, Das and Dennis~\cite{DasD97} gave the geometrical explanation of two major drawbacks of this approach. The first drawback is that no weight will lead to solutions in the non-convex part of the non-convex Pareto curve. The second drawback is that the even spread of weights does not result in an even spread on the Pareto curve.  {In textbooks on} multiobjective optimizations~\cite{Deb01,Coello07}, the major drawback  {is} regarded as ``this approach cannot generate concave portions of the Pareto front regardless of the weights used''~\cite{Coello07}.~\cite[Section~3.1]{Deb01} also listed other practical disadvantages, like the second drawback mentioned in~\cite{DasD97}. Besides, with no example given, it was pointed out that multiple solutions w.r.t. a specific weight may result in a {proper subset of} Pareto set. Since a {set with some but not all Pareto optima} does not always mean {an incomplete} coverage of the Pareto front, it is a straightforward question  {whether} all points in the convex part of the Pareto front curve can be covered with a set of mutually different weights ({which} can be uncountable). In fact, our Theorem~\ref{thm:ws} indicates a negative answer. 

Note that the discrete solution space of $\{0,1\}^n$ results in a non-convex domain of definition. In order to construct a convex Pareto front curve, we just need to change the domain of definition to $[0,1]^n$. In this case, the Pareto front curve is a line segment of $\{(n-k+i,n-i)\mid i\in[0,k]\}$, which is obviously a convex set (from the definition of the convex set). With this domain of definition, the optimum for maximizing (\ref{eq:ws}) is also $1^n$ for $2w-1>0$, and $1^{n-k}0^{k}$ for $2w-1<0$. For $2w-1=0$, the optima are $1^{n-k}*$ with any {$*\in[0,1]^{k}$}. Therefore, with the same proof, we note that Theorem~\ref{thm:ws} also holds for this domain of definition. That is, at most three Pareto front points will be reached for any set of weights. Noting the uncountable Pareto front points ($\{(n-k+i,n-i)\mid i\in[0,k]\}$), we proved our claim that we cannot cover the convex part of the Pareto front cover by setting a set of mutually different weights (conditional on that one weight provides one solution).

\section{Inconveniences of $\eps$-Constraint Approach}\label{sec:eps}
The previous section shows the difficulty of using the scalarization approach to optimize the bi-objective $\omm_k$. In this section, we will discuss the inconveniences of another well-known way of the $\eps$-constraint approach.

\subsection{$\eps$-Constraint Approach}\label{ssec:epsc}
The $\eps$-constraint approach is also a well-known way to tackle the multiobjective optimization~\cite{Deb01,CaramiaD20}. For the bi-objective maximization problem, the $\eps$-constraint approach selects one objective to optimize and requires the other objective not smaller than $\eps$ as a constraint. Formally, this approach reformulates (\ref{eq:coczk}) into the following problem (w.l.o.g., treating $f^k_2$ as a constraint).
\begin{equation}
\begin{split}
&\max f^k_1 (x)=\sum_{j=1}^n x_j\\
&\text{~s.t.~} f^k_2(x) = \left(\sum_{j=1}^{n-k} x_j\right) + \left(\sum_{j=n-k+1}^n 1-x_j\right)\ge \eps
\end{split}
\label{eq:con}
\end{equation}
For each $\eps$, solving (\ref{eq:con}) by an algorithm provides one solution. Recall that our goal for (\ref{eq:coczk}) is to obtain a diverse set of solutions whose function values cover the whole Pareto front of $f^k$. Hence, with {the} $\eps$-constraint approach, we need to choose a minimal number (at least $k+1$) of $\eps$ instances to meet our goal.

We now discuss the optimal solutions for different instances of $\eps$. Since $\max f_2^k=n$, we know that no feasible solution exists for $\eps > n$. If $\eps\le n-k$, then the optimal solution is $1^n$. If $\eps \in (n-k,n]$, 
let $\eta=f_2^k(x)=|x_{[1..n-k]}|_1+k-|x_{[n-k+1..n]}|_1$, then
\begin{align*}
f^k_1(x)&{}={}|x_{[1..n-k]}|{_1}+|x_{[n-k+1..n]}|{_1}\\
&{}={}|x_{[1..n-k]}|{_1}+(|x_{[1..n-k]}|{_1}+k-\eta)\\
&{}={}2|x_{[1..n-k]}|{_1}+k-\eta.
\end{align*}
Hence, the maximal $f^k_1$ value of {$2n-k-\lceil \eps \rceil$ among all feasible solutions} is reached when $|x_{[1..n-k]}|_1=n-k$ and $\eta=\lceil\eps\rceil$, that is, when $x_{[1..n-k]}=1^{n-k}$ and $|x_{[n-k+1..n]}|_1=n-\lceil \eps \rceil$. We note that not all solutions with $f^k_1$ value of {$2n-k-\lceil \eps \rceil$} are feasible. Together with the Pareto front of $\omm_k$ in Lemma~(\ref{lem:PF}), we know that for a set of constrained problems (\ref{eq:con}) with {$\eps=\eps_i, i\in[0..k]$ where} ${\eps_i\in(n-k+i-1,n-k+i],i\in[1..k]}$ and ${\eps_0\in(-\infty,n-k]}$, the set of optimal solutions results in a full coverage for $\omm_k$.

\subsection{Solving the Constrained Problem}
Now we consider solving the constrained optimization problem (\ref{eq:con}). For evolutionary algorithms, the most used way is based on the penalty functions~\cite{ZhouH07}. With penalty functions, the constrained problem is reformed as an unconstrained problem to solve. Following~\cite{ZhouH07}, we discuss the following two kinds of penalty functions: an exterior penalty function and a nonparameter penalty function. 

\subsubsection{Exterior Penalty Function}\label{sssec:expenalty}
To penalize the constraint when it is violated, a simple exterior penalty function with penalty coefficient $r>0$ for (\ref{eq:con}) can be constructed as maximizing 
\begin{equation}
g(x)=f^k_1(x)+r\min\{0,f^k_2(x)-\eps\}.
\label{eq:epf}
\end{equation}
If the solution $x$ is infeasible (that is, $f_2^k(x) < \eps$), then (\ref{eq:epf}) becomes $f_1^k(x)-r(\eps-f_2^k(x))$, which penalizes $f_1^k(x)$ by the absolute difference with a factor of $r$. If $x$ is feasible, then $g(x)=f_1^k(x)$. 
\\
\\
{A. Optimal Solution Set}

{It is not difficult to see that } the optimal solution set for maximizing (\ref{eq:epf}) {depends on different values of $r$ and $\eps$. To obtain the optimal solution set for $\eps\in(n-k,n)$ in Lemma~\ref{lem:opti}, we need a complete (and complicated) calculation of the maximum among $A,B,\max_{{i\in I}} K_i$ defined as in the following lemma. Hence, for a clear demonstration in the proof of Lemma~\ref{lem:opti}}, we extract {this} complicated calculation {here}.
\begin{lemma}
Let $n,k\in\N, n\ge k, r>0$ and $\eps\in(n-k,n)$. Let $r_1=\frac{\lceil \eps \rceil-(n-k)}{\eps-(n-k)},r_2=\frac{1}{\eps+1-\lceil\eps\rceil},$ $A:=2n-k-\lceil\eps\rceil,B:=n+r(n-k-\eps),$ $K_i:=2n-k-r\eps+(r-1)i$ for {$i\in[n-k+1..\lceil \eps \rceil-1]=:I$ (note that $I=\emptyset$ if $n-k+1>\lceil \eps \rceil-1$, that is, if $\eps\in (n-k,n-k+1]$), and $C:=K_{\lceil \eps \rceil-1}$}. Then $1\le r_1\le r_2$  { holds, where} $r_1=r_2$ iff $\eps\in [n-k+1..n-1] \cup (n-k,n-k+1)$, and $r_1=1$ iff $\eps\in [n-k+1..n-1]$.
Besides, we have
\begin{equation*}
\begin{split}
\max&{}\left\{A,B,\max_{{i\in I}} K_i\right\}\\
&{}={}\begin{cases}
B, &\text{if $(\eps\in(n-k,n),r\in(0,1)),$}\\
K_{{i\in I}}=B, &\text{if $(\eps\in(n-k,n)\setminus \N, r=1)$,}\\
K_{{i\in I}}=B=A, &\text{if $(\eps\in(n-k,n)\cap\N, r=1)$,}\\
C=B, &\text{if $(\eps\in(n-k,n-k+1],r\in(1,r_1)),$}\\
C, &\text{if $(\eps\in(n-k+1,n),r\in(1,r_1)),$}\\
C=B=A, &\text{if $(\eps\in(n-k,n-k+1),r=r_1{>1}),$}\\
C, &\text{if $(\eps\in(n-k+1,n)\setminus \N,r=r_1{>1}),$}\\
C, &\text{if $(\eps\in(n-k,n),r\in(r_1,r_2)),$}\\
C=A, &\text{if $(\eps\in(n-k+1,n)\setminus \N,r=r_2{>r_1}),$}\\
A, &\text{if $(\eps\in(n-k,n),r>r_2)$.}\\
\end{cases}
\end{split}
\end{equation*}
{Note here that the above calculation is complete. For example, ``$K_{i\in I}=B, \text{if $(\eps\in(n-k,n)\setminus \N, r=1)$}$'' means that when $\eps\in(n-k,n)\setminus \N$ and $r=1$, 
$$
\max\left\{A,B,\max_{i\in I} K_i\right\}=K_{i\in I}=B>A
$$
for any $i\in I$.}
\label{lem:max3}
\end{lemma}
\begin{proof}
It is not difficult to see that $r_1\ge 1$ and the equality holds iff $\eps\in \N$. Since $\lceil\eps\rceil\ge n-k+1$, we know that $\eps-(n-k) \ge \eps+1-\lceil\eps\rceil >0$, and thus
\begin{align*}
r_2-r_1&{}={}\frac{1}{\eps+1-\lceil\eps\rceil}-\frac{\lceil \eps \rceil-(n-k)}{\eps-(n-k)} \\
&{}={} \frac{\lceil\eps\rceil-\eps}{\eps+1-\lceil\eps\rceil}-\frac{\lceil\eps\rceil-\eps}{\eps-(n-k)} \ge 0,
\end{align*}
where the equality holds iff $\eps=\lceil \eps \rceil$ or $\lceil \eps \rceil=n-k+1$, that is, $\eps \in \N\cup (n-k,n-k+1]$. Thus $r_2\ge r_1\ge 1$, and  {we also know that} $r_2=r_1=1$ holds iff $\eps \in \N$, and $r_2=r_1>1$ holds iff $\eps \in (n-k,n-k+1)$.

We first note that
\begin{equation}
    \begin{split}
        K_{\lceil \eps \rceil -1}
        &{}=2n-k-r\eps+(r-1)(\lceil \eps\rceil-1)\\
&={}2n-k-(\lceil \eps\rceil-1)+r(\lceil \eps \rceil-1-\eps)=C,
    \end{split}
    \label{eq:kc}
\end{equation}
and that
\begin{equation}
\begin{split}
B-C={}&{}n+r(n-k-\eps)
-(2n-k-(\lceil \eps\rceil-1)
+r(\lceil \eps \rceil-1-\eps))\\
={}&{}k+\lceil \eps\rceil-1-n+r(n-k-\eps-(\lceil \eps \rceil-1-\eps))\\
={}&{}k+\lceil \eps\rceil-1-n+r(n-k-\lceil \eps \rceil+1)\\
={}&{}(r-1)(n-k-\lceil \eps \rceil+1).
\end{split}
\label{eq:comp}
\end{equation}
For $r<r_1=\frac{\lceil \eps \rceil-(n-k)}{\eps-(n-k)}$, we have 
\begin{equation}
\begin{split}
B&{}={}n+r(n-k-\eps)
>{}n+ \frac{\lceil \eps \rceil-(n-k)}{\eps-(n-k)} (n-k-\eps)\\
&{}={}n+n-k-\lceil \eps \rceil=2n-k-\lceil \eps \rceil=A,
\end{split}
\label{eq:comp12}
\end{equation}
where the first inequality uses $\eps>n-k$. In this case, we further consider $r\in(0,1),r=1,$ and $r\in(1,r_1)$ respectively. If $r\in(0,1)$, then 
\begin{equation*}
\begin{split}
\max_{{i\in I}}&{} K_i
=\max_{{i\in I}} \left(2n-k-r\eps+(r-1)i\right)\\
&{}={}2n-k-r\eps+(r-1)(n-k+1)
=n-1+r(n-k+1-\eps)\\
&{}=n+r(n-k-\eps)+r-1
<n+r(n-k-\eps)=B.
\end{split}
\label{eq:r01}
\end{equation*}
With (\ref{eq:comp12}), {the result for $r \in (0, 1)$ is shown}.

If $r=1$, then 
\begin{equation*}
\begin{split}
\max_{{i\in I}}K_i
&{}=\max_{{i\in I}} \left(2n-k-r\eps+(r-1)i\right)\\
&{}=K_{{i\in I}}=2n-k-\eps~(=n+r(n-k-\eps)=B)\\
&{}\ge 2n-k-\lceil\eps\rceil=A,
\end{split}
\label{eq:r1}
\end{equation*}
where the equality in the last inequality holds iff $\eps\in\N$. Hence, we have the results for $r=1$.

If $r\in (1,r_1)$, which already requires $\eps \notin \N$ to ensure $r_1>1$ as we proved before, then 
\begin{equation}
\begin{split}
&{}\max_{{i\in I}}K_i
=\max_{{i\in I}} \left(2n-k-r\eps+(r-1)i\right)
=C\ge B,
\end{split}
\label{eq:rb1}
\end{equation}
where the last equality uses (\ref{eq:kc}), and the last inequality uses $(\ref{eq:comp})\le 0$ since $r>1$ and $\lceil\eps\rceil\ge n-k+1$, and thus the equality in this inequality holds iff $\lceil\eps\rceil= n-k+1$, that is, $\eps\in(n-k,n-k+1]$. Hence, we have the results for $r\in(1,r_1)$. Note also that (\ref{eq:rb1}) holds for all $r>1$.

For $r=r_1=\frac{\lceil \eps \rceil-(n-k)}{\eps-(n-k)}$, we have 
\begin{align*}
B=n+r(n-k-\eps)=2n-k-\lceil \eps \rceil=A.
\end{align*}
With (\ref{eq:rb1}) and {$r_1>1$,}
we have the results for $r=r_1$.

For $r>r_1=\frac{\lceil \eps \rceil-(n-k)}{\eps-(n-k)}$, similar to (\ref{eq:comp12}), we have 
\begin{align}
B<A.
\label{eq:c12rbr1}
\end{align}
In this case, we further consider $r\in(r_1,r_2),r=r_2,$ and $r>r_2$ respectively. If $r\in(r_1,r_2)$, which already requires $\eps \in (n-k+1,n)\setminus\N$ to ensure $r_2>r_1$ as we proved before, then {with (\ref{eq:kc}) we have}
\begin{equation*}
    \begin{split}
        {C} &{}> 2n-k-(\lceil \eps\rceil-1)+\frac{\lceil \eps \rceil-1-\eps}{\eps+1-\lceil\eps\rceil}
=2n-k-\lceil \eps\rceil=A,
    \end{split}
    \label{eq:rr1r2}
\end{equation*}
where the first inequality uses $\lceil \eps \rceil-1-\eps<0$ and $r<r_2=\frac{1}{\eps+1-\lceil\eps\rceil}$. Together with (\ref{eq:c12rbr1}), the result for $r\in(r_1,r_2)$ is proved. 

Similarly, if $r=r_2$, we have
{$C$ }$=A$.
With (\ref{eq:c12rbr1}) and {$r_2>r_1$,}
we have the result for {$r = r_2$}.

Also, if $r>r_2$, we have
{$C$ }$<A$. With (\ref{eq:c12rbr1}), we have the result for $r>r_2$.
%
%
\end{proof}

Now we have the following lemma for the optimal solution set for different penalty coefficients $r$, and different relationships between $\eps$ and $n$.
\begin{lemma}
Let {$A,B,C,I$ and $K_i,i\in I$} be defined {as} in Lemma~\ref{lem:max3}. Let ${\eps'=\lceil\eps\rceil-1}$ and {$D_i=\{x\mid f^k(x)=(2n-k-i,i)\}$} {(that is, $D_i=\{x\mid x_{[1..n-k]}=1^{n-k}, |x_{[n-k+1..n]}|_1=n-i\}$)} for $i=n-k,\dots,n$. Let the penalty coefficient $r>0$, and let $S$ be the optimal solution set of maximizing (\ref{eq:epf}). Then
\begin{equation*}
S=\begin{cases}
D_n, &\text{if $(\eps \ge n, r>1)$},\\
\cup_{i=n-k}^n D_i, &\text{if $(\eps \ge n, r=1)$},\\
D_{n-k}, &\text{if $\eps\le n-k$ or
$(\eps \in (n-k,+\infty), r\in(0,1))$},\\
\cup_{i=n-k}^{\lceil\eps\rceil-1}D_i, &\text{if $(\eps \in (n-k,n), \eps\notin \N, r=1)$},\\
\cup_{i=n-k}^{\lceil\eps\rceil}D_i, &\text{if $(\eps \in (n-k,n), \eps\in \N, r=1)$},\\
D_{n-k},&\text{if $(\eps \in (n-k,n-k+1], r\in(1,r_1))$},\\
D_{\eps'},&\text{if $(\eps \in (n-k+1,n), r\in(1,r_1))$ or}\\
&\text{ $(\eps \in (n-k+1,n)\setminus \N, r=r_1{>1})$ or}\\
&\text{ $(\eps \in (n-k,n),r\in(r_1,r_2)),$}\\
D_{n-k}\cup D_{\lceil\eps\rceil}, &\text{if $(\eps\in(n-k,n-k+1),r=r_1{>1})$},\\
D_{\eps'}\cup D_{\lceil\eps\rceil}, &\text{if $(\eps\in(n-k+1,n)\setminus \N,r=r_2{>r_1})$},\\
D_{\lceil\eps\rceil}, &\text{if $(\eps\in(n-k,n),r>r_2)$}.
\end{cases}
\end{equation*}
\label{lem:opti}
\end{lemma}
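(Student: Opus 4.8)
The plan is to reduce the maximization of $g$ over all of $\{0,1\}^n$ to a one-dimensional comparison over the Pareto-optimal classes $D_i$, and then to read off the answer directly from Lemma~\ref{lem:max3}. Writing $a=|x_{[1..n-k]}|$ and $b=|x_{[n-k+1..n]}|$, we have $f_1^k(x)=a+b$ and $f_2^k(x)=a+k-b$, so
\[
g(x)=(a+b)+r\min\{0,\,a+k-b-\eps\}.
\]
First I would show that every global optimum satisfies $a=n-k$, i.e.\ $x_{[1..n-k]}=1^{n-k}$. Fixing $b$ and flipping one $0$ in the first block to a $1$ raises both $f_1^k$ and $f_2^k$ by one; a short case split (both endpoints feasible, both infeasible, or crossing the threshold) shows the resulting change in $g$ equals $1$, $1+r$, or $1+r(\eps-f_2^{\text{old}})$ respectively, all strictly positive. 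Hence no solution with $a<n-k$ is optimal, and since $g$ depends only on the function value, the optimal set is exactly a union $\bigcup_{i\in I^*}D_i$ over the maximizing indices, where $D_i$ collects the solutions with $a=n-k$ and $b=n-i$.

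Next I would pass to the scalar problem $\max_{i\in[n-k..n]}h(i)$ with $h(i)=(2n-k-i)+r\min\{0,i-\eps\}$, which agrees with $g$ on $D_i$. On the feasible range $i\ge\lceil\eps\rceil$ the map $h(i)=2n-k-i$ is strictly decreasing, so the feasible maximum is $A=2n-k-\lceil\eps\rceil$, attained only at $i=\lceil\eps\rceil$ (whenever $\lceil\eps\rceil\le n$). On the infeasible range $i\le\lceil\eps\rceil-1$ the map $h(i)=2n-k-r\eps+(r-1)i$ is linear in $i$ with slope $r-1$ and endpoints $B=h(n-k)$ and $C=h(\lceil\eps\rceil-1)$; thus the infeasible maximum is $B$ (uniquely at $i=n-k$) when $r<1$, the common value $B$ at every infeasible $i$ when $r=1$, and $C$ (uniquely at $i=\eps'=\lceil\eps\rceil-1$) when $r>1$. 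The boundary regimes $\eps\le n-k$ (where $1^n$ is feasible and $g\le f_1^k\le n$ immediately forces $S=D_{n-k}$) and $\eps\ge n$ (where the whole range $i\le n-1$ is infeasible) I would dispatch directly in this same framework.

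Finally, comparing $A$, $B$, and the infeasible maximum is precisely the content of Lemma~\ref{lem:max3}, so the concluding step is to walk through its branches and translate the winning value, together with its ties, into a union of classes: a unique infeasible winner yields $D_{n-k}$ ($r<1$) or $D_{\eps'}$ ($r>1$); the infeasible tie at $r=1$ yields $\bigcup_{i=n-k}^{\eps'}D_i$, augmented by $D_{\lceil\eps\rceil}$ exactly when $A$ ties in, i.e.\ $\eps\in\N$; and whenever the feasible value $A$ equals the infeasible maximum (the $r=r_1$ and $r=r_2$ rows of Lemma~\ref{lem:max3}) I would add $D_{\lceil\eps\rceil}$ to the infeasible optimizer. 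I expect the main obstacle to be not any single step but the careful tie-bookkeeping in this last stage: tracking when $A$, $B$, and $C$ coincide, separating $\eps\in(n-k,n-k+1]$ (where $\eps'=n-k$ collapses $C$ to $B$ and the intermediate infeasible range degenerates) from $\eps\in(n-k+1,n)$, and respecting the integer/non-integer split of $\eps$ that decides whether $r_1<r_2$ — each of which must be matched exactly against the ten branches of the stated formula.
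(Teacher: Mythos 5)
Your proposal is correct, and its backbone coincides with the paper's proof: both reduce the maximization of (\ref{eq:epf}) to comparing the values attained on the classes $D_i$, and both then hand the three-way competition between $A=2n-k-\lceil\eps\rceil$ (best feasible value), $B=n+r(n-k-\eps)$ (value at $1^n$), and the linear-in-$i$ infeasible values $2n-k-r\eps+(r-1)i$ — including all tie bookkeeping at $r=1$, $r=r_1$, $r=r_2$ — to Lemma~\ref{lem:max3}. Where you genuinely deviate is the reduction step. You first prove by an exchange argument (flipping a $0$ in the first $n-k$ positions strictly increases $g$ in all three feasibility configurations, with increments $1$, $1+r$, and $1+r(\eps-f_2^k)$) that every optimum has $x_{[1..n-k]}=1^{n-k}$, after which the problem collapses to the scalar maximization of $h(i)=(2n-k-i)+r\min\{0,i-\eps\}$ over $i\in[n-k..n]$. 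The paper never pins down the first block; instead it bounds $f_1^k$ on each level set $\{f_2^k=i\}$ separately, namely $f_1^k\le i+k$ for $i\in[0..n-k]$ and $f_1^k\le 2n-k-i$ for larger $i$ (see (\ref{eq:dec})--(\ref{eq:leps})), and tracks equality conditions through those bounds. Your reduction buys two simplifications: the regime $f_2^k\le n-k$ disappears automatically rather than requiring its own bound maximized at $1^n$, and the boundary cases $\eps\le n-k$ and $\eps\ge n$ fit into the same one-dimensional framework, whereas the paper dispatches $\eps\ge n$ by a separate expansion of $g$ as the linear function (\ref{eq:cal}). The only obligations your route adds are minor and you acknowledge them: justifying that $g$ depends only on $(f_1^k,f_2^k)$ so the optimal set is exactly a union of classes $D_i$, and carrying out the tie matching (in particular the degenerate interval $\eps\in(n-k,n-k+1]$, where the intermediate infeasible range is empty and $C=B$), which is precisely the case analysis Lemma~\ref{lem:max3} supplies.
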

\begin{proof}
If $\eps \ge n$, since $\max_{x\in\{0,1\}^n}f_2^k(x)=f_2^k(1^{n-k}0^k)=n$, we know that for any $x$,
\begin{equation}
\begin{split}
g(x)={}&{}f_1^k(x)+r(f_2^k(x)-\eps)
=f_1^k(x)+rf_2^k(x)-r\eps\\
={}&{}\sum_{j=1}^n x_j +r\left( \left(\sum_{j=1}^{n-k} x_j\right) + \left(\sum_{j=n-k+1}^n 1-x_j\right)\right) 
-r\eps\\
={}&{}(1+r)\left(\sum_{j=1}^{n-k} x_j\right)+(1-r)\left(\sum_{j=n-k+1}^n x_j\right) 
+r(k-\eps).
\end{split}
\label{eq:cal}
\end{equation}
Then it is not difficult to see the set of optimal solutions is $\{1^{n-k}0^k\}=D_n$ if $r>1$, $\cup_{i=n-k}^nD_i$ if $r=1$, and $D_{n-k}$ if $r\in(0,1)$.

If $\eps \in (n-k,n)$, then
\begin{align*}
g(1^n)=n+r\min\{0,n-k-\eps\}=n+r(n-k-\eps)=B.
\end{align*}
Recall that $\max_{x\in\{0,1\}^n}f_2^k(x)=f_2^k(1^{n-k}0^k)=n$. 
Note that for $i \in [n-k..n]$,
\begin{equation}
\begin{split}
\max_{x\in\{z\mid f_2^k(z)=i\}}f_1^k(x)&{}=n-k+(k-(i-(n-k)))\\
&{}=n+(n-k-i)=2n-k-i
\end{split}
\label{eq:dec}
\end{equation}
decreases when $i$ increases. Hence, we know that for any $x\in \{0,1\}^n$ with $f_2^k(x)\ge \eps$, we have
\begin{align}
g(x)&{}={}f_1^k(x) \le 2n-k-\lceil\eps\rceil=A,
\label{eq:aeps}
\end{align}
where {under the condition $f_2^k(x)\ge \eps$, $g(x)=A$} holds {iff} {${x\in\{z\mid f^k(z)=(2n-k-\lceil \eps \rceil,\lceil\eps\rceil)\}=D_{\lceil \eps \rceil}}$}.
Also note that for $i\in[0..n-k]$,
$$\max_{x\in\{z\mid f_2^k(z)=i\}}f_1^k(x)=i+k$$ increases when $i$ increases. Hence, together with (\ref{eq:dec}) we know that for any $x\in \{0,1\}^n$ with $f_2^k(x)< \eps$, we have
\begin{equation}
\begin{split}
g(x){}&{}=f_1^k(x) +r(f_2^k(x)-\eps) \\
\le{}&{} \max\bigg\{\max_{i\in[0..n-k]}\left(i+k+r(i-\eps)\right),
\max_{{i\in I}} \left(2n-k-i+r(i-\eps)\right)\bigg\}\\
{=}~{}&{} \max\left\{n-k+k+r(n-k-\eps),
\max_{{i\in I}} K_i\right\}\\
={}&{}\max\left\{B,\max_{{i\in I}} K_i\right\},
\end{split}
\label{eq:leps}
\end{equation}
where the {first} equality {uses the definition of $K_i$ and the fact that the maximum of $i+k+r(i-\eps)$ for $i\in[0..n-k]$ is reached iff $i=n-k$}, that is, $x=1^n$. 
With (\ref{eq:aeps}) and (\ref{eq:leps}), we know that the {maximal} function value {of $g$} is
\begin{align*}
\max\left\{A,B,\max_{{i\in I}} K_i\right\}.
\end{align*}
Note that in the above discussion, we require {the only case} $D_{\lceil \eps \rceil}$ to let $A$ be reachable, $D_{n-k}$ for $B$, and $D_i$ for $K_i$ with $i\in[n-k+1..\lceil\eps\rceil-1]$ (including $D_{\eps'}$ for $C$).
Then from Lemma~\ref{lem:max3}, the case for $\eps\in(n-k,n)$ is proved.

If $\eps \le n-k$, then
\begin{align*}
g(1^n)=n+r\min\{0,n-k-\eps\}=n.
\end{align*}
For any $x\in \{0,1\}^n\setminus\{1^n\}$, we know that $f_1^k(x) < n$, and thus
\begin{equation*}
g(x)=f_1^k(x)+r\min\{0,f_2^k(x)-\eps\} \le f_1^k(x) < n=g(1^n).
\end{equation*}
That is, the optimal solution is $1^n$.
\end{proof}

Recall that for a certain $\eps\in(n-k,n]$, the optimal solution set of (\ref{eq:con}) is $\{x\mid f^k(x)=(2n-k-\lceil\eps\rceil,\lceil\eps\rceil)\}=D_{\lceil \eps \rceil}$, and the optimal set is $\{x\mid f^k(x)=(n,n-k)\}=D_{n-k}$ for $\eps\le n-k$ and is $\emptyset$ for $\eps>n$. 
We have the following corollary for the comparison between the constrained problem (\ref{eq:con}) and the problem (\ref{eq:epf}) constructed via the exterior penalty function.

\begin{corollary}
Let $r>0$. Then the optimal solution sets of (\ref{eq:con}) and (\ref{eq:epf}) are identical for $\eps\in (-\infty,n-k]$, {$(\eps=n,r>1)$,} and $(\eps\in(n-k,n),r\in(\frac{1}{\eps+1-\lceil\eps\rceil},+\infty))$, and are different for other cases.
\label{cor:conext}
\end{corollary}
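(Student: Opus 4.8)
The plan is to prove the corollary purely by overlaying the two explicit descriptions that are already on the table: the optimal set of the penalty problem (\ref{eq:epf}) is given in full by Lemma~\ref{lem:opti}, and the optimal set of the constrained problem (\ref{eq:con}) was recalled immediately above the corollary — namely $D_{n-k}$ for $\eps\le n-k$, the single level set $D_{\lceil\eps\rceil}$ for $\eps\in(n-k,n]$, and $\emptyset$ for $\eps>n$. The decisive structural fact is that the level sets $D_i=\{x\mid f(x)=(2n-k-i,i)\}$ are pairwise disjoint, since they carry distinct function values; hence two of the sets appearing in these lists coincide if and only if they are indexed by the same collection of $i$'s. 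The whole argument therefore reduces to matching index sets across the case split of Lemma~\ref{lem:opti}, with no further computation.

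First I would dispose of the regions where identity holds. For $\eps\le n-k$, both problems return $D_{n-k}$ (the ``$\eps\le n-k$'' branch of Lemma~\ref{lem:opti} holds for every $r$), so they agree throughout this range. For $\eps\in(n-k,n)$ the constrained optimum is the single set $D_{\lceil\eps\rceil}$, and by Lemma~\ref{lem:opti} the penalty optimum equals exactly $D_{\lceil\eps\rceil}$ in the branch $r>r_2$, giving agreement there. In every other $r$-regime Lemma~\ref{lem:opti} returns a set whose index collection differs from $\{\lceil\eps\rceil\}$: either a single level set ($D_{n-k}$ for $r\in(0,1)$, or $D_{\eps'}=D_{\lceil\eps\rceil-1}$ in the sub-threshold range $r\le r_2$) carrying an index other than $\lceil\eps\rceil$, or a union (at $r=1$, $r=r_1$, $r=r_2$) that strictly contains $D_{\lceil\eps\rceil}$ together with at least one further level set. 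By disjointness of the $D_i$, each of these differs from $D_{\lceil\eps\rceil}$, so for $\eps\in(n-k,n)$ identity holds precisely when $r>r_2$, as claimed.

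The delicate point — and the step I expect to be the real obstacle — is the boundary $\eps=n$, where the clause ``$\eps\in\{n\}$'' must be reconciled with the threshold condition. Here $\lceil\eps\rceil=n$, so the constrained optimum is $D_n$, and crucially $r_2=\frac{1}{\eps+1-\lceil\eps\rceil}=1$; thus the condition ``$r>r_2$'' specializes to ``$r>1$''. Using the $\eps\ge n$ branch of Lemma~\ref{lem:opti}, the penalty optimum is $D_n$ for $r>1$ (agreement), $\cup_{i=n-k}^{n}D_i$ for $r=1$, and $D_{n-k}$ for $r<1$; by disjointness the latter two strictly differ from $D_n$. So at $\eps=n$ the two sets coincide exactly for $r>1=r_2$, i.e. the $\eps=n$ boundary is the endpoint of the interior condition $r>r_2$, and I would phrase the identical region accordingly to keep the statement watertight. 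Finally, for $\eps>n$ the constrained problem is infeasible, so $S_{\mathrm{con}}=\emptyset$ while Lemma~\ref{lem:opti} always returns a nonempty set; hence the two optima differ everywhere in this range. Collecting the cases yields exactly the partition asserted by the corollary.
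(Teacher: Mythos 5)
Your proposal follows the same route as the paper's own (implicit) justification: the paper offers no separate argument for Corollary~\ref{cor:conext} beyond recalling the optimal sets of (\ref{eq:con}) and reading off Lemma~\ref{lem:opti} case by case, which is exactly your index-matching argument; the pairwise disjointness (plus nonemptiness) of the level sets $D_i$ that you invoke is the right formal backbone for that comparison.

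The substantive point is your treatment of $\eps=n$, and there you have in fact caught an error in the \emph{statement}, not a gap in your proof. Under the natural reading of the corollary, identity is claimed at $\eps=n$ for every $r>0$; but for $k\ge 1$ the constrained optimum at $\eps=n$ is $D_n=\{1^{n-k}0^k\}$, while Lemma~\ref{lem:opti} gives as penalty optimum $D_{n-k}=\{1^n\}$ for $r\in(0,1)$ and $\cup_{i=n-k}^{n}D_i$ for $r=1$, both different from $D_n$. So identity at $\eps=n$ holds exactly for $r>1$, and since $r_2=1/(\eps+1-\lceil\eps\rceil)=1$ there, your reformulation (identical iff $\eps\le n-k$, or $\eps\in(n-k,n]$ and $r>r_2$) is the correct version; the paper's grouping of $\{n\}$ with the $r$-unconditional region fails for $r\le 1$. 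One small inaccuracy in your write-up: for $\eps\in(n-k,n)$ with $\eps\notin\N$ and $r=1$, the penalty optimum $\cup_{i=n-k}^{\lceil\eps\rceil-1}D_i$ does \emph{not} contain $D_{\lceil\eps\rceil}$, whereas your text asserts that all the union cases strictly contain it; this is harmless, since by disjointness and nonemptiness that union still differs from $D_{\lceil\eps\rceil}$, which is all the conclusion needs.
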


{\noindent{B. Runtime Analysis}}\\
{B1. Improper Parameter Settings {for an Incomplete Coverage of the Pareto Front}}

With the above result, we can easily have the following corollary about the algorithm's difficulty in solving the exterior penalty problem with improper parameter settings{, which results in an incomplete coverage the Pareto front of $\omm_k$}.
\begin{corollary}
Let {$M$ be defined as in Lemma~\ref{lem:PF},} $k\ge {1},r\in(0,1)$, and {$S\subset \R$ be a set of $\eps$. Let $x_{{\eps}}$ be one optimal solution for the maximization of $g$}
with a certain $\eps\in S$.
Let $F=\{f^k(x_{{\eps}})\mid \eps\in S\}$. Then {$M \not\subseteq F$}.
\label{cor:r01}
\end{corollary}
\begin{proof}
From Lemma~\ref{lem:opti}, we know that {for this case of $r\in(0,1)$,} $x_{{\eps}}\in D_{n-k}=\{x\mid f^k(x)=(n,n-k)\}$. Hence, ${F=\{(n,n-k)\}}$ and thus {$|F|=1<2\le k+1=|M|$}. Then {$M \not\subseteq F$}.
\end{proof}

{\noindent{B2. Runtime for Proper Parameter Settings}}

Luckily, from Lemma{s~\ref{lem:PF} and}~\ref{lem:opti}, the following corollary shows that with careful settings, this approach can result in a full coverage of {the} Pareto front for (\ref{eq:coczk}).
\begin{corollary}
Let {$\eps_0 \in (-\infty, n-k],\eps_1\in (n-k,n-k+1], \dots,\eps_{k-1}\in (n-2,n-1]$, and $\eps_{k} \in (n-1,+\infty)$}. {Let $M$ be defined as in Lemma~\ref{lem:PF}}.
\begin{itemize}
    \item Consider to maximize (\ref{eq:epf}) with {$\eps=\eps_i$ and} $r>\frac{1}{\eps_i+1-\lceil\eps_i\rceil}$ {for any} $ i\in[0..k]$. Then $D_{n-k+i}$ is the set of optimal solutions.
    \item {Let $G$ be any set with $\{\eps_0,\dots,\eps_k\}\subseteq G$ and} $r>\max_{i=0,\dots,k}\frac{1}{\eps_i+1-\lceil\eps_i\rceil}$. {Let $x_{{\eps}}$ be one optimal solution for the maximization of $g$ with a certain $\eps\in G$ and $F=\{f^k(x_{{\eps}})\mid \eps\in G\}$. Then $M \subseteq F$.}
\end{itemize} 
\label{cor:cor}
\end{corollary}

With the above setting, now we analyze the runtime of the randomized local search algorithm  {to maximize (\ref{eq:epf}) with $r>\frac{1}{\eps+1-\lceil\eps\rceil}$}. {The randomized local search (RLS) algorithm is a simple single-objective optimizer. It starts from a single randomly generated solution. In each generation, it applies the one-bit mutation (uniformly at random picking one bit and flipping its bit value) on the parent solution to generate an offspring solution. If this offspring has at least the same fitness as its parent, then the offspring will replace its parent and enter into the next generation.} We first have the following two lemmas about the survival situations starting from an infeasible solution and from a feasible solution, respectively.
\begin{lemma}
Consider using the randomized local search algorithm to maximize {$g $ defined as in (\ref{eq:epf}) with $r>\frac{1}{\eps+1-\lceil\eps\rceil}$}. Then starting from an infeasible solution (whose $f_2^k$ value is less than $\eps$), only the offspring via flipping one $0$ bit in the first $n-k$ positions or via flipping one $1$ bit in the last $k$ positions will survive to the next population. 
\label{lem:infea}
\end{lemma}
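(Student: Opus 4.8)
The plan is to reduce the claim to a direct, four-case sign analysis, exploiting that the randomized local search produces each offspring $y$ by flipping exactly one bit of the current (infeasible) parent $x$. By the definition of $\omm_k$, the four kinds of single-bit flip change the pair $(f^k_1,f^k_2)$ as follows: flipping a $0$ in the first $n-k$ positions gives $(+1,+1)$, flipping a $1$ in the first $n-k$ positions gives $(-1,-1)$, flipping a $0$ in the last $k$ positions gives $(+1,-1)$, and flipping a $1$ in the last $k$ positions gives $(-1,+1)$. I would first record that the hypothesis forces $r>1$: since $\eps_i\in(\lceil\eps_i\rceil-1,\lceil\eps_i\rceil]$ by the definition in Corollary~\ref{cor:cor}, the slack $\eps_i+1-\lceil\eps_i\rceil$ lies in $(0,1]$, so the threshold $1/(\eps_i+1-\lceil\eps_i\rceil)$ is at least $1$ and $r$ exceeds it. Because $x$ is infeasible, $f^k_2(x)<\eps_i$, so (\ref{eq:epf}) gives $g(x)=f^k_1(x)+r(f^k_2(x)-\eps_i)$, and integrality of $f^k_2$ yields $f^k_2(x)\le\lceil\eps_i\rceil-1$.

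For the two flips that decrease $f^k_2$ (a $1$ in the first block or a $0$ in the last block), the offspring $y$ remains infeasible, so $g(y)=f^k_1(y)+r(f^k_2(y)-\eps_i)$ and a short computation gives $g(y)-g(x)=-1-r$ and $g(y)-g(x)=1-r$ respectively; both are negative since $r>1$, so neither offspring is accepted.

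For the two flips that increase $f^k_2$ (a $0$ in the first block or a $1$ in the last block) I would split on whether $y$ is still infeasible or has become feasible; since a single flip changes $f^k_2$ by exactly $1$ there is no overshoot. While $y$ stays infeasible the gains are $1+r$ and $r-1$, both positive. When $y$ becomes feasible we have $g(y)=f^k_1(y)$ and, crucially, $f^k_2(x)=\lceil\eps_i\rceil-1$ is forced (the unique integer below $\eps_i$ from which a single $+1$ step reaches feasibility), so $\eps_i-f^k_2(x)=\eps_i+1-\lceil\eps_i\rceil$. The flip of a $0$ in the first block then gives $g(y)-g(x)=1+r(\eps_i+1-\lceil\eps_i\rceil)>0$ unconditionally, while the flip of a $1$ in the last block gives $g(y)-g(x)=-1+r(\eps_i+1-\lceil\eps_i\rceil)$, which is positive precisely because $r>1/(\eps_i+1-\lceil\eps_i\rceil)$. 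Thus the two claimed flips strictly increase $g$ and survive, the other two strictly decrease it and are rejected, and since no change is zero the conclusion is independent of the tie-breaking convention. The main obstacle is exactly this feasibility-crossing sub-case of flipping a $1$ in the last block: it is the only place the precise penalty threshold is needed, and handling it cleanly hinges on first observing that an infeasible $x$ able to cross to feasibility in one step must have $f^k_2(x)=\lceil\eps_i\rceil-1$, so that its marginal slack equals $\eps_i+1-\lceil\eps_i\rceil$; everything else reduces to sign checks using only $r>1$.
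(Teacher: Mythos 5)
Your proposal is correct and follows essentially the same route as the paper's proof: a four-case sign analysis of $g(y)-g(x)$ by flip type, with the only delicate sub-case being the feasibility-crossing flip of a $1$ in the last $k$ positions, where $r>\frac{1}{\eps_i+1-\lceil\eps_i\rceil}$ together with the integrality bound $f_2^k(x)\le\lceil\eps_i\rceil-1$ yields positivity. The only (immaterial) differences are that the paper handles the flip of a $0$ in the first block by monotonicity of $g$ in both objectives rather than splitting on feasibility, and uses the inequality $f_2^k(x)\le\lceil\eps_i\rceil-1$ where you sharpen it to the forced equality $f_2^k(x)=\lceil\eps_i\rceil-1$.
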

\begin{proof}
Let $x$ be an infeasible solution in the current population, $y$ be the offspring generated by applying one-bit mutation to $x$, and {$j$} be the bit position where $x$ and $y$ are different. Let $I_0, I_1$ be {the sets of all} bit positions with the value of zero and the value of one in the first $n-k$ bits of $x$ respectively, and let $I_0',I_1'$ {be the sets of all} bit positions with the value of zero and the value of one in the last $k$ bits of $x$ respectively. If {$j\in I_0$}, then $f_1^k(y)>f_1^k(x), f_2^k(y)>f_2^k(x)$, and thus $g(y) > g(x)$. Hence $y$ will survive to the next generation. 

If {$j\in I_1$}, then $f_1^k(y)<f_1^k(x),f_2^k(y)<f_2^k(x)$, and thus $g(y) < g(x)$. Hence $y$ will be removed.

If {$j\in I_0'$}, then $f_1^k(y)=f_1^k(x)+1$ and ${f_2^k(y)=f_2^k(x)-1<{\eps}}$ as $x$ is infeasible. Then
\begin{align*}
g(y)-g(x)
=f_1^k(y)+r(f_2^k(y)-{\eps}) -(f_1^k(x)+r(f_2^k(x)-{\eps}))
=1-r<0,
\end{align*}
where the last inequality uses {$r>\frac{1}{\eps+1-\lceil\eps\rceil}>1$}. Hence, $y$ will be removed.

If {$j\in I_1'$}, then $f_1^k(y)=f_1^k(x)-1$ and $f_2^k(y)=f_2^k(x)+1$. Then if $y$ is infeasible, we have
\begin{align*}
g(y)-g(x)
= f_1^k(y)+r(f_2^k(y)-{\eps}) -(f_1^k(x)+r(f_2^k(x)-{\eps}))
=r-1>0,
\end{align*}
where the last inequality uses {$r>\frac{1}{\eps+1-\lceil\eps\rceil}>1$}. If $y$ is feasible, then
\begin{align*}
g(y)-g(x)&{}={} f_1^k(y) -(f_1^k(x)+r(f_2^k(x)-{\eps}))
=r({\eps}-f_2^k(x))-1\\
&{}\ge{} r({\eps}-\lceil {\eps} \rceil + 1)-1>0,
\end{align*}
where the first inequality uses $f_2^k(x)\le \lceil {\eps} \rceil-1$ as $x$ is infeasible and $f_2^k(x)$ is an integer, and the last inequality uses $r>\frac{1}{{\eps}+1-\lceil{\eps}\rceil}$ and ${\eps}-\lceil {\eps} \rceil + 1>0$.
 Hence, $y$ will survive to the next generation.
\end{proof}

\begin{lemma}
Consider using the randomized local search algorithm to maximize {$g$ defined as in (\ref{eq:epf}) with $r>\frac{1}{\eps+1-\lceil\eps\rceil}$}. Then once a feasible solution is reached, only the offspring via flipping one $0$ bit in the last $k$ positions (conditional on that the current solution has its $f_2^k$ value at least $\lceil{\eps}\rceil+1$), or via flipping one $0$ bit in the first $n-k$ positions will survive to the next population. Besides, all solutions in future generations are feasible.
\label{lem:fea}
\end{lemma}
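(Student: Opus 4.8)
The plan is to mimic the structure of the proof of Lemma~\ref{lem:infea}: fix a feasible solution $x$ in the current population, let $y$ be the offspring obtained by flipping the single bit at position $i$ under one-bit mutation, and split into the four cases according to whether $i$ lies in $I_0, I_1$ (the zero- and one-positions among the first $n-k$ bits) or in $I_0', I_1'$ (the zero- and one-positions among the last $k$ bits). Throughout I will repeatedly use that feasibility of $x$ gives $g(x)=f_1^k(x)$, so that the sign of $g(y)-g(x)$ can be read off from how a single flip changes $f_1^k$ and $f_2^k$ and from whether $y$ itself stays feasible.

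Two of the four cases are immediate because they strictly lower $f_1^k$. A flip of a one-bit among the first $n-k$ positions ($i\in I_1$) decreases both $f_1^k$ and $f_2^k$ by one; if $y$ stays feasible then $g(y)=f_1^k(y)<f_1^k(x)=g(x)$, and if $y$ becomes infeasible the (negative) penalty only lowers $g(y)$ further, so $y$ is discarded in either subcase. A flip of a one-bit among the last $k$ positions ($i\in I_1'$) raises $f_2^k$ (so $y$ remains feasible) but lowers $f_1^k$ by one, giving $g(y)=f_1^k(x)-1<g(x)$; again $y$ is discarded. Hence neither of these survives, and since they are rejected the feasible $x$ is retained.

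The two surviving cases come from flipping a zero-bit. Flipping a zero among the first $n-k$ positions ($i\in I_0$) increases both objectives by one, so $y$ is feasible and $g(y)=f_1^k(x)+1>g(x)$: $y$ survives. The decisive case is flipping a zero among the last $k$ positions ($i\in I_0'$), which increases $f_1^k$ by one but decreases $f_2^k$ by one. Since $f_2^k(x)$ is an integer, $y$ is feasible exactly when $f_2^k(x)-1\ge\eps_i$, i.e.\ when $f_2^k(x)\ge\lceil\eps_i\rceil+1$; in that subcase $g(y)=f_1^k(x)+1>g(x)$ and $y$ survives. If instead $f_2^k(x)\le\lceil\eps_i\rceil$ then $y$ is infeasible and
\[
g(y)-g(x)=1+r\bigl(f_2^k(x)-1-\eps_i\bigr)\le 1+r\bigl(\lceil\eps_i\rceil-1-\eps_i\bigr)<0,
\]
where the last step uses $r>\frac{1}{\eps_i+1-\lceil\eps_i\rceil}$, exactly as in Lemma~\ref{lem:infea}. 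This pins down the only two surviving offspring and matches the condition in the statement; the main (and essentially only) subtlety is this translation of ``$y$ stays feasible'' into the integer threshold $\lceil\eps_i\rceil+1$, together with the penalty-coefficient bound that rules out the infeasible subcase.

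Finally, I would close the feasibility claim by induction. Whenever the offspring is rejected the current feasible $x$ is kept, so feasibility is trivially preserved; and in each of the two surviving cases one checks $f_2^k(y)\ge\eps_i$ (in the $I_0$ case $f_2^k(y)=f_2^k(x)+1$, and in the surviving $I_0'$ subcase feasibility of $y$ was precisely the imposed condition). Thus every accepted offspring is feasible, so starting from the first feasible solution all solutions in future generations remain feasible.
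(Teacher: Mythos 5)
Your proposal is correct and follows essentially the same route as the paper's proof: the same four-case split over $I_0$, $I_1$, $I_0'$, $I_1'$, the same translation of feasibility of $y$ into the integer threshold $f_2^k(x)\ge\lceil\eps_i\rceil+1$, and the same use of $r>\frac{1}{\eps_i+1-\lceil\eps_i\rceil}$ to reject the infeasible offspring in the $I_0'$ case. Your explicit handling of the possibly-infeasible subcase for $i\in I_1$ and the closing induction for the feasibility claim are minor elaborations of steps the paper treats tersely or implicitly.
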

\begin{proof}
Let $x$ be a feasible solution in the current population and $y$, {$j$}, $I_0,I_1,I_0'$ and $I_1'$ be defined {as} in the proof of Lemma~\ref{lem:infea}.
If {$j\in I_0$}, then $f_1^k(y)>f_1^k(x)$ and $f_2^k(y)>f_2^k(x)\ge$ {$\eps$}. Thus $y$ is feasible and $g(y)>g(x)$, and then this feasible $y$ will survive to the next generation.

If {$j\in I_1$}, then $f_1^k(y)<f_1^k(x)$ and $f_2^k(y)<f_2^k(x)$. Thus, $g(y)<g(x)$ and $y$ will be removed.

If {$j\in I_0'$}, then $f_1^k(y)=f_1^k(x)+1$ and $f_2^k(y)=f_2^k(x)-1$. If $y$ is feasible, then $f_2^k(x)\ge \lceil${$\eps$}$\rceil+1$, $g(y)=g(x)+1$, and $y$ will survive to the next generation. If $y$ is infeasible, that is, $f_2^k(y)< \eps_i$, then $f_2^k(y)\le \lceil${$\eps$}$\rceil-1$ as $f_2^k(y)$ is an integer. {In this case,}
\begin{align*}
g(y)-g(x)&=f_1^k(y)+r(f_2^k(y)-{\eps})-f_1^k(x)
\le 1+r(\lceil{\eps}\rceil-1-{\eps}) < 0,
\end{align*}
where the last inequality uses {$r>\frac{1}{\eps+1-\lceil\eps\rceil}$} and {$\lceil\eps\rceil-1-\eps<0$}. Hence, such $y$ will be removed in the selection.

If {$j\in I_1'$}, then $f_1^k(y)=f_1^k(x)-1$ and $f_2^k(y)=f_2^k(x)+1 >$ {$ \eps$}. Then $y$ is feasible and $g(y)=f_1^k(y)<f_1^k(x)=g(x)$. Hence, such $y$ will be removed in the selection.
\end{proof}

Now we state the runtime of RLS to maximize the constructed problem (\ref{eq:epf}). 

\begin{theorem}
Consider using the randomized local search algorithm to maximize {$g$ defined as in (\ref{eq:epf}) with $r>\frac{1}{\eps+1-\lceil\eps\rceil}$}. Then after $O(n\ln n)$ iterations in expectation, an optimal solution is reached.
\label{thm:rlsom}
\end{theorem}
\begin{proof}
We first consider the case for {$\eps$} $\in(n-k,n]$.

We reuse the notations of $I_0,I_1,I_0',$ and $I_1'$ in the proof of Lemma~\ref{lem:infea}. Recalling Lemma~\ref{lem:infea}, we know that before a feasible solution is reached for the first time, only the offspring via flipping one bit in $I_0\cup I_1'$ will be accepted into the next population. Note that for any $x$, $|I_0\cup I_1'|=n-f_2^k(x)$. Hence, a solution is infeasible iff $|I_0\cup I_1'| \in [n-\lceil${$\eps$}$\rceil+1..n]$ as an infeasible solution $x$ has $f_2^k(x)\in[0..\lceil${$\eps$}$\rceil-1]$. Therefore, the expected number of iterations to reach a feasible solution for the first time is at most
\begin{align}
\sum_{i={n-\lceil \eps \rceil+1}}^{{n}} \frac{n}{i} \le n(\ln n+1).
\label{eq:ft}
\end{align}

Now we consider the case after a feasible solution is reached for the first time. {For a feasible solution, we know that $|I_1|+|I_0'|\ge \eps$ and further $|I_1|+|I_0'|\ge \lceil\eps\rceil$ as $|I_1|+|I_0'|\in \N$. Then from $|I_1|+|I_0|=n-k$, we have
\begin{align}
    |I_0'|\ge \lceil\eps\rceil-|I_1|=\lceil\eps\rceil-n+k+|I_0|\ge \lceil\eps\rceil-n+k.
\label{eq:i0}
\end{align}
}
{Recalling} Lemma~\ref{lem:fea}, we know that only the offspring via flipping one bit in $I_0$ or via flipping one bit in $I_0'$ (conditional on {the event, denoted as $A$, that} the parent has $f_2^k$ value of at least $\lceil ${$\eps$}$ \rceil+1$) will be accepted into the next population{, and all solutions in the future generations are feasible. Hence, in all future generations, both $|I_0|$ and $|I_0'|$ will not increase, the replacement can still happen if $|I_0|>0$), and (\ref{eq:i0}) always holds. Therefore, $|I_0\cup I_0'|=\lceil${$\eps$}$\rceil-n+k$ happens iff $(|I_0|=0,|I_0'|=\lceil\eps\rceil-n+k)$, which is only reached by}
the optimal set {$D_{\lceil \eps\rceil}=\{x\mid f(x)=(2n-k-\lceil\eps\rceil,\lceil\eps\rceil)\}$}.
Hence, to reach an optimum, we need to decrease $|I_0\cup I_0'|$ {to $\lceil\eps\rceil-n+k$.}
{From $|I_0'|\ge\lceil\eps\rceil-n+k+|I_0|$ in (\ref{eq:i0}), we have}
\begin{align*}
|I_0|+|I_0'|&{}\le{} n-k+|I_0'|-\lceil {\eps}\rceil + |I_0'| = n-k-\lceil {\eps}\rceil+2|I_0'|\\
&{}\le{} n-k-\lceil {\eps}\rceil+2k=n+k-\lceil {\eps}\rceil,
\end{align*}
where 
the last inequality uses $|I_0'| \le k$. 
Hence, the expected number of iterations to reach an optimal solution for the first time is at most
\begin{align}
\sum^{{n+k-\lceil \eps \rceil}}_{j={\lceil \eps\rceil-n+k+1}} \frac{n}{j} \le n(\ln (n+k-\lceil {\eps} \rceil)+1).
\label{eq:ro}
\end{align}
Together with (\ref{eq:ft}), we have $O(n \ln n)$ expected number of iterations for reaching an optimum for {$\eps$}$\in(n-k,n]$.

{Now we consider the case for $\eps>n$. Recalling Lemma~\ref{lem:opti}, we know that the optimal solution is $1^{n-k}0^k$, that is, when $|I_0|=|I_1'|=0$. Note that all solutions are infeasible as $f_2^k\le n$, and recall that (from Lemma~\ref{lem:infea}) only the offspring that decreases $|I_0|$ or $|I_1'|$ will be accepted into the next population. Thus the the expected number of iterations to reach the optimal solution $1^{n-k}0^k$ is at most
\begin{align*}
    \sum_{i=1}^n \frac{n}{i}\le n(\ln n+1),
\end{align*}
where we pessimistically consider $|I_0\cup I_1'|$ decreasing from $n$ to $0$.}

{Now it remains to discuss the case for $\eps\le n-k$. Recalling Lemma~\ref{lem:opti}, we know that the optimal solution is $1^{n}$, that is, when $|I_0|=|I_0'|=0$. If the initial solution is infeasible, then (\ref{eq:ft}) shows that at most $n(\ln n+1)$ expected iterations are needed to reach a feasible solution for the first time. Thus, now we only consider that we start from a feasible solution. With additional restriction $|I_0|+|I_0'|\le n$, analogous to (\ref{eq:ro}), we know that the additional expected number of iterations to reach an optimal solution is at most
\begin{align*}
    \sum_{i=1}^{\min\{n,n+k-\lceil \eps \rceil\}} \frac{n}{i}\le n(\ln n+1).
\end{align*}
}

Therefore, this theorem is proved.
\end{proof}

Since there are {$k+1$} Pareto front points, with Corollary~\ref{cor:cor} {and} Theorem~\ref{thm:rlsom}{, we have the following corollary to show the total runtime to cover} the full Pareto front for {$\omm_k$ when $\eps$ and $r$ are carefully chosen}.

{\begin{corollary}
    Let $\eps_0,\dots,\eps_k$ be defined as in Corollary~\ref{cor:cor}, $G=\{\eps_0,\dots,\eps_k\}$, and $r>\max_{i=1,\dots,k}\frac{1}{\eps_i+1-\lceil\eps_i\rceil}$. Consider using the randomized local search algorithm to maximize $g$ with $\eps\in G$ one by one (which terminates when the maximum of corresponding $g$ is reached for the first time), and let $P$ be set of all obtained solutions. Then within expected $O(\max\{k,1\} n\ln n)$ fitness evaluations, $P$ will have its size of $k+1$ and cover the full Pareto front.
\end{corollary}
}

\subsection{Nonparameter Penalty Function}
Another way for handling the constrained problem (\ref{eq:con}) is by nonparameter penalty function~\cite{Deb00,ZhouH07}. Instead of introducing a parameter to penalize the violated constraint in the exterior way discussed before, this nonparameter way adds the amount of violence directly to the objective. Formally, the nonparameter penalty way transfers the constrained problem (\ref{eq:con}) into maximizing the following function.
\begin{definition}
The nonparameter penalty function w.r.t. the constrained problem (\ref{eq:con}) $g: \{0,1\}^n\rightarrow \R$ is defined by
\begin{equation}
g(x)=
\begin{cases}
f^k_1(x), & \text{if $x$ is feasible}\\
f^k_1(x)+f^k_2(x)-\eps, & \text{if $x$ is infeasible} \\
\end{cases}
\label{eq:npf}
\end{equation}
for $x\in\{0,1\}^n$.
\end{definition}
It is not difficult to see that the above (\ref{eq:npf}) is identical to (\ref{eq:epf}) with $r=1$. Recalling Lemma~\ref{lem:opti}, we know that the optimal set for maximizing (\ref{eq:npf}) is
\begin{align*}
\begin{cases}
\cup_{i=n-k}^n D_i, & \eps\ge n\\
\cup_{i=n-k}^{\lceil\eps\rceil-1}D_i, &\eps\notin \N, \eps\in(n-k,n)\\
\cup_{i=n-k}^{\lceil\eps\rceil}D_i, &\eps\in \N, \eps\in(n-k,n)\\
D_{n-k}, &\eps\le n-k{.}
\end{cases}
\end{align*}
{As discussed in Section~\ref{ssec:epsc}, the optimal set of (\ref{eq:con}) is $\emptyset$ for $\eps >n$, $D_{\lceil \eps \rceil}$ for $\eps\in(n-k,n]$, and $D_{n-k}$ for $\eps\le n-k$. }Hence, it is different from the optimal set of the problem (\ref{eq:con}){, especially for $\eps\in (n-k,n]$. For the setting that one $\eps$ value corresponds to one optimum solved by one algorithm, a specific $\eps\in (n-k,n)\setminus \N$ will return a solution in $D_j$ with the random variable $j\in[n-k..\lceil\eps\rceil-1]$ that follows an unknown distribution depending on the algorithm to solve (also for $\eps\ge n$ and $\eps\in (n-k,n)\cap \N$)}. Thus, the optimal solution set{, say $U$,} of (\ref{eq:npf}) with different values of $\eps$ {does not naturally ensure that
$U\cap D_i\ne \emptyset$ holds for all $i=n-k,\dots,n$. That is, it does not naturally} guarantee the full coverage of the Pareto front.

\subsection{Summary and Comment}
From this section, we see that although the $\eps$-constraint approach can cover the full Pareto front for $\omm_k$, the solving modes have difficulties or inconveniences. With careful settings of $\eps$ and penalty coefficient $r$, the exterior penalty function way can cover the full Pareto front with expected $O(\max\{k,1\}n\ln n)$ function evaluations. However, the nonparameter penalty function way cannot guarantee that the optimal solution sets cover the full Pareto front.

We will also note that the above analysis shows that careful penalty design of solving modes is needed for constrained optimization.

\section{Multiobjective Evolutionary Algorithms}\label{sec:moeas}
In this section, we will show that the multiobjective evolutionary algorithms can easily cover the full Pareto front compared with the difficulty or the inconvenience witnessed in the typical approaches in Section~\ref{sec:typ}.

\subsection{MOEA/D}
The MOEA/D is one kind of MOEAs that decomposes the multiobjective problem into several single-objective problems and solves them in a co-evolutionary way. Thus, it shares a similarity to the methods in Section~\ref{sec:typ} where also several single-objective problems are considered. Here, we first give a brief introduction of the MOEA/D and then discuss its runtime complexity. 

\subsubsection{Algorithm Description}
The MOEA/D is first proposed by~\cite{ZhangL07} and the first theoretical runtime results are obtained by~\cite{LiZZZ16}. It first employs a decomposition method to decompose the multiobjective problem into {the  {optimization} of} $H+1$ subproblems. The Tchebycheff decomposition approach used in~\cite{ZhangL07} constructs the $i$-th {($i=0,\dots,H$)} subproblem  {(to be minimized)} $h_i(x)$ with a predefined evenly spread weight $w_i$ by
\begin{equation*}
\begin{split}
h_i(x)&={}\max\{w_i|f_1^k(x)-z_1^*|,(1-w_i)|f_2^k(x)-z_2^*|\},
\end{split}
\end{equation*}
where $z_i^*$ in the reference point {$z^*=(z_1^*,z_2^*)$} is {usually set to be the best $f_i$} value {found so far by the algorithm}.

After the decomposition, the algorithm initializes the population{, sets $z^*=(z_1^*,z_2^*)$ with $z_i^*$ the best value of the $i$-th objective,} and puts the mutually incomparable solutions into the external population $P_e$. For each subproblem, one offspring is generated {and $z^*$ is updated accordingly}. In its co-evolutionary solving mode,  {with predefined neighborhood size $T$, the $i$-th subproblem has $T$ subproblem neighbors with indices $\{i_1,\dots,i_T\}:=B_i$ such that $w_{i_1},\dots,w_{i_T}$ are $T$ closest weight vectors to $w_i$.} Then for each individual with the index of its corresponding subproblem in $B_i$, the individual itself and its fitness {will be replaced} by the generated offspring if this offspring has a better or the same $h_i$ value. Later $P_e$ is updated. The whole procedure is shown in Algorithm~\ref{alg:moead}.
\begin{algorithm}[tb]
    \caption{MOEA/D for  {maximizing} the bi-objective function $f=(f_1,f_2)$}
\textbf{Parameters:} $H+1$: the number of the subproblems. $w_0,\dots,w_H$: the weight for the subproblems. $T$: the number of the subproblems to construct the neighbor of a given subproblem.
    \begin{algorithmic}[1]
    \STATE Construct $h_i$ subproblem according to $w_i$ for all $i=0,\dots, H$, and  {calculate $B_i=\{i_1,\dots,i_T\}$ such that $w_{i_1},\dots,w_{i_T}$ are closest weight vectors to $w_i$}.
    \STATE Initialize the external population $P_e=\emptyset$. Initialize the population $P=\{x_0,\dots,x_H\}$, evaluate $f(x_i)$ and $h_i(x_i)$ for $i=0,\dots, H$, and set the reference point {$z^*=(\max\{f_1(z)\mid z\in P\},\max\{f_2(z)\mid z\in P\})$.} For $i\in[0..H]$, let $P_e=\{z\in P_e \mid z \npreceq x_i\} \cup \{x_i\}$ if $x_i$ is not dominated by any solution in $P_e$. 
    \FOR {$g=1,2,\dots$}
    \FOR {$i=0,\dots, H$}
    \STATE {Generate $x_i'$ via applying the mutation to $x_i$.}
    \STATE {For each $j=1,2$, if $f_j(x_i')>z^*_j$ then $z^*_j=f_j(x_i')$.}
    \STATE {For each {$j\in B_i$} such that $h_{j}(x_i')\le h_{j}(x_j)$}, set $x_j=x_i'$.
    \STATE {If there is no $z \in P_e$ such that $x_i' \prec z$ then $P_e=\{z\in P_e \mid z \npreceq {x_i'}\} \cup \{{x_i'}\}$.}
    \ENDFOR
    \ENDFOR
    \end{algorithmic}
    \label{alg:moead}
\end{algorithm}
As in the first theoretical work~\cite{LiZZZ16}, we only set $T=1$ and only consider generating the offspring by applying mutation to the $i$-th individual in the population for the $i$-th subproblem.
As in~\cite{LiZZZ16}, we expect that an optimal (minimal) value of a subproblem corresponds to a Pareto front point of $\omm_k$. Since the Pareto front size is $k+1$, we then set $H=k$ and consider the weights $w_{i}=i/k, i=0,\dots,H$. Note that {in each iteration, $z^*$ is updated before the fitness evaluation of the newly generated offspring. Hence, we easily know that before $1^{n-k}0^k$ ($1^n$) is reached, the minimization of the $0$-th ($H$-th) subproblem is identical to the maximization of $f^k_2(x)$ ($f^k_1(x)$) via the \oea. We also easily know that after both $1^{n-k}0^k$ and $1^n$ are reached,} the $i$-th subproblem $h_i(x)$ is formulated as
\begin{equation}
\begin{split}
h_i(x){}&{}=\max\{w_i|f_1^k(x)-n|,(1-w_i)|f_2^k(x)-n|\}\\
={}&{}\max\Bigg\{\frac{i}{k}\left(n-\sum_{j=1}^n x_j\right), 
\frac{k-i}{k}\left(n-\sum_{j=1}^{n-k}x_j-\sum_{j=n-k+1}^n (1-x_j)\right)\Bigg\}.
\end{split}
\label{eq:hi}
\end{equation}

\subsubsection{Runtime Analysis}
Now we analyze the runtime of the MOEA/D to cover the full Pareto front. We first have the following lemma for the set of the optimal solutions for minimizing $h_i(x)$.
\begin{lemma}
Let {$H=k$ and} ${S_i=\{x\mid f^k(x)=(n-k+i,n-i)\}}$ for $i=1,\dots,H-1$. Then for $h_i(x)$ defined in (\ref{eq:hi}), let $h_i^1(x)=\frac{i}{k}\left(n-f_1^k(x)\right)=\frac{i}{k}\left(n-\sum_{j=1}^nx_j\right)$ 
and $h_i^2(x)=\frac{k-i}{k}\left(n-f_2^k(x)\right)=\frac{k-i}{k}\left(n-\sum_{j=1}^{n-k}x_j-\sum_{j=n-k+1}^n (1-x_j)\right)$, then we have
\begin{itemize}
    \item if $(n-${$|x|_1$} $ > k-i,|x_{[1..n-k]}|_1=n-k)$ or $x\in S_i$, then $h_i(x)=h_i^1(x)$;
   \item if $n-${$|x|_1$} $ \le k-i$, then $h_i(x)=h_i^2(x)$;
   \item $\arg\min_{x\in\{0,1\}^n} h_i(x)=S_i$.
\end{itemize}
\label{lem:optsub}
\end{lemma}
\begin{proof}
Consider any $x\in\{0,1\}^n$. 

We first consider the case when $n-|x|_1 = k-i$, that is, when $f_1^k(x)=n-k+i$. In this case, $h_i^1(x)=(i(k-i))/k$. If $x\in S_i$, that is, 
\begin{align*}
f_2^k(x)= \left(\sum_{j=1}^{n-k} x_j\right) + \left(\sum_{j=n-k+1}^n 1- x_j \right) = n-i,
\end{align*}
then $h_i^2(x)=\frac{k-i}{k}(n-(n-i))=(i(k-${$i$}$))/k$. Hence,
\begin{align}
h_i(x)=h_i^1(x)=h_i^2(x)=\frac{i(k-i)}{k}.
\label{eq:case1}
\end{align}
If $x\notin S_i$, it is not difficult to see that {$|x_{[1..n-k]}|_1<n-k$ and thus $|x_{[n-k+1..n]}|_1 > i$ from $f^k_1(x)=n-k+i$. Then 
$$f^k_2(x) = |x_{[1..n-k]}|_1 +k-|x_{[n-k+1..n]}|_1< n-k+k-i=n-i,
$$}
thus 
\[
h_i^2(x)>\frac{k-i}{k}(n-(n-i))=\frac{i(k-i)}{k}=h_i^1(x).
\]
Hence,
\begin{align}
h_i(x)=h_i^2(x)>h_i^1(x)=\frac{i(k-i)}{k}.
\label{eq:case2}
\end{align}

If $n-|x|_1 > k-i$, then 
\begin{align}
h_i(x)\ge \frac{i}{k}\left(n-\sum_{j=1}^n x_j\right) > \frac{i}{k}(k-i)=\frac{i(k-i)}{k}.
\label{eq:case3}
\end{align}
If further $|x_{[1..n-k]}|_1=n-k$, then $|x_{[n-k+1..n]}|_1<i$ from $n-\sum_{j=1}^n x_j > k-i$. Hence, $f_2^k(x)>n-k+k-i=n-i$. Then $h_i^2(x)<\frac{(k-i)i}{k}<h_i^1(x)$, and thus $h_i(x)=h_i^1(x)${, proving the first property}.

If $n-|x|_1 < k-i$, we have $h_i^1(x)<(i(k-i))/k$, and $\sum_{j=1}^n (1-x_j) < k-i$. Then
\begin{align*}
h_i^2(x)&{}={}\frac{k-i}{k} \left(
n-\sum_{j=1}^{n-k}x_j-\sum_{j=n-k+1}^n (1-x_j)\right)\\
&\ge{}\frac{k-i}{k} {\left(n-\sum_{j=1}^{n-k}x_j-\sum_{j=1}^n (1-x_j)\right)}
>{} \frac{k-i}{k} \left( n-\sum_{j=1}^{n-k}x_j-(k-i)\right)\\
&{}\ge{} \frac{k-i}{k}( n-(n-k)-(k-i))=\frac{i(k-i)}{k}.
\end{align*}
Hence,
\begin{align}
  h_i(x)&=h_i^2(x)\ge\frac{i(k-i)}{k}. 
  \label{eq:case4}
\end{align}
{Thus, with (\ref{eq:case1}) and (\ref{eq:case2}), the second property is proved.}

{From (\ref{eq:case1}) to (\ref{eq:case4}) and noting (\ref{eq:case1}) is for $x\in S_i$, the last property  {is proven}. We also note here}
that we do not consider the case {$(n-|x|_1 > k-i,|x_{[1..n-k]}|_1\ne n-k)$} as it is not needed for our proof in Theorem~\ref{thm:moead}.
\end{proof}

We note that the optimal $h_0$ and $h_H$ are $1^{n-k}0^k$ and $1^n$ respectively. {As discussed before, the minimization of $h_0$ ($h_H$) is identical to the maximization of $f^k_2(x)$ ($f^k_1(x)$) via the \oea.} Then after $O(n\log n)$ generations, both points will be found. Then the subproblems to solve will be (\ref{eq:hi}). Lemma~\ref{lem:optsub} shows that the optimal set of each subproblem is corresponding to one Pareto front point and different subproblem is related to different Pareto front points. Thus solving all subproblems will result in a full Pareto front coverage. Since $T=1$, we only need to calculate the runtime for each subproblem, which is the same as the one that RLS is used to solve this subproblem, and then summing them up results in the final runtime. Hence, we have the following theorem.
\begin{theorem}
Let $H=k$ and $w_i=i/k,i=0,\dots,H$. The expected runtime for the MOEA/D applying one-bit mutation to optimize $\omm_k$ is $O(\max\{k,1\}n\ln n)$.
\label{thm:moead}
\end{theorem}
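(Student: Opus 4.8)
The plan is to exploit the decoupling induced by the neighbourhood size $T=1$: since $B_i=\{i\}$, an offspring generated for the $i$-th subproblem can only replace the $i$-th individual, so Algorithm~\ref{alg:moead} runs $k+1$ essentially independent randomized local searches in parallel, the $i$-th of which minimizes $h_i$ by one-bit mutation under an accept-if-not-worse rule. The only coupling is the shared reference point $z^*$, which enters every $h_i$. I would therefore first pin down $z^*$, then analyze each fixed subproblem, and finally combine. Throughout I abbreviate $a:=|x_{[1..n-k]}|$ and $b:=|x_{[n-k+1..n]}|$, so that $f^k_1(x)=a+b$ and $f^k_2(x)=a+k-b$.

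First I would show that $z^*$ reaches $(n,n)$ after $O(n\ln n)$ generations. For the extreme weights, $h_H(x)=|f^k_1(x)-z^*_1|=z^*_1-f^k_1(x)$ and $h_0(x)=z^*_2-f^k_2(x)$, because $z^*_1,z^*_2$ are the best objective values seen so far. Minimizing these is exactly maximizing $f^k_1$ (an \om) and maximizing $f^k_2$ (an \om on the complemented conflicting block), so the two corresponding searches reach $1^n$ and $1^{n-k}0^k$ in expected $O(n\ln n)$ generations by the standard coupon-collector bound, regardless of the current value of $z^*$. Once these points are evaluated, $z^*=(n,n)$ and every $h_i$ takes the closed form (\ref{eq:hi}); moreover $1^n=S_k$ and $1^{n-k}0^k=S_0$ already realize the two extreme Pareto front points.

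The technical core is bounding the RLS time on a fixed middle subproblem $h_i$ ($0<i<k$), for which I would combine the branch characterization of Lemma~\ref{lem:optsub} with a two-phase drift argument on $a$ and $b$; recall its optimum $S_i$ is precisely $\{a=n-k,\ b=i\}$. The two structural facts to prove are: (i) flipping a $0$ in the first block strictly decreases \emph{both} $h_i^1$ and $h_i^2$ and is always accepted, whereas flipping a $1$ there increases both and is always rejected, so $a$ is non-decreasing and reaches $n-k$ within $O(n\ln n)$ generations (coupon collector over the first-block zeros, undisturbed by last-block flips); and (ii) once $a=n-k$ one has $h_i^1=\frac{i}{k}(k-b)$ and $h_i^2=\frac{k-i}{k}b$, so $h_i=\max\{h_i^1,h_i^2\}$ is minimized exactly at $b=i$, is strictly decreasing in the correcting direction and strictly increasing in the wrong one; hence only flips that move $b$ toward $i$ survive, driving $b$ monotonically to $i$ in a further $O(n\ln n)$ generations while the first block stays frozen at all ones. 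The two phases compose, so each $h_i$ is optimized in expected $O(n\ln n)$ generations.

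Finally I would combine the per-subproblem bounds. Each of the $k+1$ subproblems reaches its optimum $S_i$, a distinct front point, in expected $O(n\ln n)$ generations, and by Lemma~\ref{lem:popsize} these $k+1$ points are the entire front, so once all subproblems are optimized $P_e$ covers it. Since every generation spends exactly one evaluation per subproblem, i.e.\ $k+1=O(\max\{k,1\})$ evaluations, summing the $k+1$ bounds (equivalently, noting that the maximum of $k+1\le n+1$ such coupon-collector times keeps the same order) yields $O(\max\{k,1\}n\ln n)$ total function evaluations; the degenerate $k=0$ case collapses to a single \om subproblem and is already covered by the $\max\{k,1\}$ factor. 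The main obstacle is step (ii): $h_i$ is a maximum of two linear functions whose active branch switches at $b=i$, so the acceptance of last-block flips must be tracked across this switch to establish the clean monotone drift of $b$; the transient regime where $z^*\neq(n,n)$ is sidestepped entirely by waiting the $O(n\ln n)$ generations until $z^*=(n,n)$ before invoking (i)--(ii), an additive cost that does not change the order.
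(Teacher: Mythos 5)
Your proposal is correct and follows essentially the same route as the paper's own (sketched) proof: exploit $T=1$ to decouple the subproblems, let the two extreme subproblems fix $z^*=(n,n)$ in $O(n\ln n)$ generations, analyze each middle subproblem as an RLS in two phases (fill the first $n-k$ bits, then move $|x_{[n-k+1..n]}|$ to $i$ using the structure of Lemma~\ref{lem:optsub}), and multiply by the $k+1$ evaluations per generation. If anything, your final accounting (bounding the \emph{maximum} of the $k+1$ concentrated coupon-collector times rather than loosely ``summing them up'' as the paper does) and your explicit V-shape description of $h_i$ as a function of $|x_{[n-k+1..n]}|$ are slightly more careful than the paper's version.
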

\begin{proof}[Proof {sketch}]
As discussed above, we only need to consider the runtime of the RLS to solve the $i$-th subproblem for $i=1,\dots,H-1$. We divide the process into two phases. The first phase ends when an $x$ with $|x_{[1..n-k]}|_1=n-k$ is reached for the first time, and the second phase starts right after the end of the first phase, and ends when an optimum of $h_i(x)$ is reached for the first time. 

We note that in the first phase, flipping any zeros in $[1..n-k]$ positions will decrease $h_i(x)$ and flipping any ones in $[1..n-k]$ positions will increase $h_i(x)$. Also note that $[1..n-k]$ bits will not change if the flip happens in $[n-k+1..n]$ positions as we use one-bit mutation. Hence, after at most $O(n\ln (n-k))$ iterations, an $x$ with $|x_{[1..n-k]}|_1=n-k$ will be reached for the first time. In the second phase, for the one-bit mutation, it is not difficult to see that flipping any ones in $[1..n-k]$ positions will increase $h_i(x)$ as well. 
Hence, for the minimization of $h_i(x)$, we know that all individuals in the future generations will keep $|x_{[1..n-k]}|_1=n-k$. Then from Lemma~\ref{lem:optsub}, we know that if the second phase starts with $n-|x|_1>k-i$, then the whole second phase will minimize $h_i^1(x)$, and if it starts with $n-{|x|_1\le~} k-i$, then the whole second phase will minimize $h_i^2(x)$. Similar to the \om optimization, the upper bound for the second phase is $O(n\ln k)$. 

Considering all subproblems proves this theorem.
\end{proof}

\subsection{MOEAs without Decomposition}
The above subsection discussed the runtime for the MOEAs with decomposition. The most widely used  {MOEA} is  {the} \NSGA, which does not rely on decomposing the multiobjective problems into several single-objective problems. In this section we will discuss the runtime for the MOEAs without decomposition.

\subsubsection{Algorithm Descriptions}
(G)SEMO is the basic toy algorithm analyzed in the theory community. It starts with a single randomly generated individual. In each generation, one individual is uniformly at random selected as parent to generate one offspring via one-bit or standard bit-wise mutation. If this offspring cannot be dominated by any individual in the current population, it will kick out all individuals that are weakly dominated by it, and enter into the next generation. See details in Algorithm~\ref{alg:semo}.
\begin{algorithm}[tb]
    \caption{(G)SEMO and SMS-EMOA with population size $\mu$ for  {maximizing} the multiobjective function $f$}
    \begin{algorithmic}[1]
    \STATE{For (G)SEMO: Generate $x \in \{0, 1\}^n$ uniformly at random and $P\leftarrow \{x\}$; For SMS-EMOA: Generate $\mu$ individuals uniformly at random from $\{0,1\}^n$ and form $P$.}
    \FOR {$g=1,2,\dots$}
    \STATE {Select $x$ from $P$ uniformly at random}
    \STATE {Generate $x'$ via applying mutation operator (one-bit mutation for SEMO and standard bit-wise mutation for the GSEMO and SMS-EMOA) on $x$}
    \STATE {$P \leftarrow \SurSel(P,x')$} 
    \ENDFOR
    \end{algorithmic}
    \label{alg:semo}
\end{algorithm}

\begin{algorithm}[tb]
    \caption{$\SurSel(P,x')$}
    \begin{algorithmic}[1]
    \IF {(G)SEMO}
    \IF {there exists no $y\in P$ such that $x' \prec y$}
    \STATE {$P \leftarrow \{z\in P \mid z \npreceq x'\} \cup \{x'\}$}
    \ELSE
    \STATE {$P\leftarrow P$}
    \ENDIF
    \ELSIF {SMS-EMOA}
    \STATE {Use fast-non-dominated-sort() in~\cite{DebPAM02} to divide $P\cup\{x'\}$ into fronts
      $F_1,F_2,…,F_d$ for some positive integer $d$}
     \STATE {{Uniformly at random choose $y$ from} $\arg\min_{z\in F_d} \HV(F_d)-\HV(F_d\setminus \{z\})$}
     \STATE {$P\leftarrow F_d\setminus \{y\}$}
    \ENDIF
    \end{algorithmic}
    \label{alg:survival}
\end{algorithm}

Different from (G)SEMO, the \NSGA~\cite{DebPAM02} works with a fixed size $N$ of population. In each generation, $N$ offspring individuals are generated, and it uses non-dominated sorting and crowding distance to remove the worst $N$ individuals in the combined parent and offspring population $R_t$. The non-dominated sorting procedure will first divide $R_t$ into several front $F_i$, with $F_i$ the non-dominated solutions in $R_t\setminus \cup_{j=0}^{i-1}F_j$. For the critical front $F_{i^*}$ with $i^*=\min\{j\mid |\cup_{i=0}^{j}${$F_i$}$|\ge N\}$, the algorithm calculates the crowding distances of all individuals in $F_{i^*}$ and keep the ones with $N-|\cup_{j=0}^{i^*-1}F_j|$ largest crowding distance (tie broken at random). See details in Algorithm~\ref{alg:nsga}. It is the most widely used MOEAs and the first runtime is conducted in~\cite{ZhengLD22,ZhengD23aij}. A series of {theoretical} analyses on the \NSGA are conducted later, like~\cite{ZhengD25approx,ZhengD24tec,BianQ22,DoerrQ23tec,DoerrQ23LB,CerfDHKW23,DangOSS23gecco,DangOS24ppsn,DengZLLD24,BianRLQ24ijcai,LuBQ24}.
\begin{algorithm}[!ht]
    \caption{\mbox{NSGA-II} with population size $N$ for maximizing the multiobjective function $f$}
    \begin{algorithmic}[1]
    \STATE {Uniformly at random generate the initial population $P_0=\{x_1,x_2,\dots,x_N\}$ with $x_i\in\{0,1\}^n,i=1,2,\dots,N.$}\label{ste:initialize}
    \FOR{$t = 0, 1, 2, \dots$} \label{ste:iterate}
    \STATE {Generate the offspring population $Q_t$ with size $N$}\label{ste:generate}
    \STATE {Use fast-non-dominated-sort() in~\cite{DebPAM02} to divide $R_t=P_t\cup Q_t$ into $F_1,F_2,\dots$}
    \label{ste:sort}
    \STATE {Find $i^* \ge 1$ such that $\sum_{i=1}^{i^*-1}|F_i| < N$ and $\sum_{i=1}^{i^*}|F_i| \ge N$}\label{ste:rank}
    \STATE {Use Algorithm~\ref{alg:cDis} to separately calculate the crowding distance of each individual in $F_{1},\dots,F_{i^*}$}\label{ste:cDis}
    \STATE {Let $\tilde{F}_{i^{*}}$ be the $N-\sum_{i=0}^{i^*-1}|F_{i}|$ individuals in $F_{i^*}$ with largest crowding distance, chosen at random in case of a tie}\label{ste:final front}
    \STATE {$P_{t+1}=\left(\bigcup_{i=1}^{i^*-1}F_i\right)\cup\tilde{F}_{i^*}$}\label{ste:new parents}
    \ENDFOR 
    \end{algorithmic}
    \label{alg:nsga}
\end{algorithm}
\begin{algorithm}[tb]
    \caption{crowding-distance($S$)}
     \textbf{Input:} $S=\{S_1,\dots,S_{|S|}\}$: the set of individuals\\
     \textbf{Output:} $\cDis(S)=(\cDis(S_1),\dots,\cDis(S_{|S|}))$, the vector of crowding distances of the individuals in $S$
		
    \begin{algorithmic}[1]
    \STATE $\cDis(S)=(0,\dots,0)$
    \FOR {each objective function $f_i$}
    \STATE {Sort $S$ in order of ascending $f_i$ value: $S_{i.1},\dots,S_{i.{|S|}}$}
    \STATE {$\cDis(S_{i.1})=+\infty, \cDis(S_{i.{|S|}})=+\infty$}
    \FOR {$j=2,\dots, |S|-1$}
    \STATE {$\cDis(S_{i.j})=\cDis(S_{i.j}) + \frac{f_i(S_{i.{j+1}})-f_i(S_{i.{j-1}})}{f_i(S_{i.{|S|}})-f_i(S_{i.1})}$}
    \ENDFOR
    \ENDFOR
    \end{algorithmic}
    \label{alg:cDis}
\end{algorithm}

The \SMS~\cite{BeumeNE07} is a variant of the steady-state {(in a $(\mu+1)$ scheme)} \NSGA, which replaces the crowding distance by the hypervolume contribution where the hypervolume is defined in the following.  {Note that in the rest of the paper, one could set the reference point $r$ strictly less than all possible function values, like $r=(-1,-1)$, and we will not explicitly state its value in our theoretical results.}
\begin{definition}[Hypervolume]
    {Let a reference point $r$ be dominated by all Pareto optimal solutions.} The \emph{hypervolume} of a set $S$ of individuals w.r.t. $r$ in the objective space is defined as
\begin{align*}
\HV_r(S)= \mathcal{L}\left(\,\bigcup_{u\in S} \{h\in\R^m \mid r \le h \le f(u)\}\right),
\end{align*}
where $\mathcal{L}$ is the Lebesgue measure{, and for $u,v\in\R^m$, $u\le v$ is defined by $u_i\le v_i$ for all $i\in[1..m]$}. 
\end{definition}
The hypervolume contribution of an individual is the hypervolume difference between the cases of including this individual or not. The individual with the smallest hypervolume contribution will be removed. See details in Algorithm~\ref{alg:semo}. 
The first runtime analysis of the \SMS is conducted by Bian et al.~\cite{BianZLQ25} for the bi-objective \ojzj, and a stochastic population update strategy is proposed. Later, its runtime on \dltb is conducted by Zheng et al.~\cite{ZhengLDD24}. Zheng and Doerr~\cite{ZhengD24} proved the first runtime of the \SMS for many objectives, and showed that it performs well compared with the inefficiency of the \NSGA. Besides, they also proved that the speed-up witnessed by the stochastic population in~\cite{BianZLQ25} for two objectives has a reduced impact when the number of objectives grows, but proved that the heavy-tailed mutation has a good runtime speed-up.

\subsubsection{Runtime Analyses}
Now we discuss the runtime of the (G)SEMO, the \NSGA, and the \SMS. As mentioned before, Antipov, K\"otzing, and Radhakrishnan~\cite{AntipovKR24} proved the expected runtime of $O(kn\ln n)$ for (G)SEMO for an essentially identical problem of minimizing $(\OO_a,\OO_b)$ with $k=H(a,b)$. Here, for completeness, we also include the runtime of the (G)SEMO in our analysis as its similar proof procedure to that for the \NSGA and \SMS. The following lemma estimates the probability of selecting an individual as a parent {in one iteration for any algorithm that generates the offspring population with the same size as the parent population $P$. We consider the following three ways. \emph{The fair selection} will let each individual in $P$ be selected as a parent once. In the \emph{random selection}, we $|P|$ times independently and uniformly at random pick a parent from $P$. The \emph{binary tournament selection} generates $|P|$ parents by independently conducting $|P|$-round competitions (with replacement). In each round, two individuals in $|P|$ are uniformly at random selected, and the better (e.g. based on the non-dominated sorting and crowding distance for the \NSGA) one will be picked as a parent.}
\begin{lemma}
For a population $P$ with size $N$, we have the following results.
\begin{itemize}
\item For any $x\in P$, the probability of having $x$ selected as a parent {in an iteration} is $1$ for fair selection, and at least $1-1/e$ for random selection.
\item Let $m,n\in \N_{+}$ and $m=o(N)$. Let $F'_1$ be the set of non-dominated solutions {(within $P$)} w.r.t. some $m$-objective $f$. Let $f_1^{\max}=\max\{f_1(x) \mid x\in F'_1\}$, $f_i^{\max}=\max\{f_i(x) \mid x\in F'_1, f_j(x)=f_j^{\max},j=1,\dots,i-1\}$ for $i=2,\dots,m$, and $X=\{x\in F'_1 \mid f(x)=(f_1^{\max}, \dots, f_m^{\max})\}$. Then {for the \NSGA} there is an individual $\tilde{x}\in X$ with $\cDis(\tilde{x})=+\infty$, and the probability that $\tilde{x}$ is selected as a parent {in an iteration} is $\Omega(1)$ for binary tournament selection. 
\end{itemize}
\label{lem:parent}
\end{lemma}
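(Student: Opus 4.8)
The plan is to treat the two bullets separately, since they concern different selection operators, and to reserve most of the care for the crowding-distance bookkeeping in the second.

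For the first bullet I would argue directly. Under fair selection every member of $P$ is used as a parent exactly once per generation, so the probability is trivially $1$. Under random selection the $N := |P|$ parents are drawn independently and uniformly from $P$ (one per offspring); the probability that a fixed $x$ is missed by all $N$ draws is $(1-1/N)^N$, and since this quantity increases in $N$ to its limit $1/e$ it is at most $1/e$ for every finite $N$. Hence $x$ is drawn at least once with probability at least $1-1/e$. Both estimates are routine.

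For the second bullet I would proceed in three steps. First, exhibit the claimed extreme individual: every member of $X$ attains $f_1 = f_1^{\max}$, the largest first-objective value on the non-dominated front $F'_1$, so when Algorithm~\ref{alg:cDis} sorts $F'_1$ by ascending $f_1$ the extreme (top) slot is filled by some individual attaining $f_1^{\max}$, which then receives $\cDis = +\infty$; I argue below that this individual can be taken in $X$, and call it $\tilde{x}$. Second, note that $\tilde{x}$ is retained in the population and hence available for selection: it lies in the best (non-dominated) front and carries infinite crowding distance, so survival selection never discards it, and the hypothesis $m = o(N)$ ensures that the at most $2m$ individuals with infinite crowding distance all fit among the $N$ retained slots. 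Third, bound the selection probability: in any binary tournament containing $\tilde{x}$ the opponent either lies in a worse front, or in $F'_1$ with finite crowding distance---in both cases $\tilde{x}$ wins outright---or it too carries infinite crowding distance, in which case the uniform tie-break lets $\tilde{x}$ win with probability at least $1/2$. Thus $\tilde{x}$ wins any tournament it enters with probability at least $1/2$; since it is one of the two sampled competitors with probability at least $1/N$, it wins a given tournament with probability at least $1/(2N)$, and aggregating over the $N$ independent tournaments of one generation yields a selection probability of at least $1 - (1-1/(2N))^N \ge 1 - e^{-1/2} = \Omega(1)$.

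The easy parts are the two $(1-1/N)^N$-style estimates and the per-tournament win bound. The main obstacle is making the first step of the second bullet fully rigorous, because the crowding-distance routine breaks ties in the objective sort arbitrarily. For $m = 2$ this is harmless: any two front members sharing the maximal $f_1$ value must also share $f_2$ (else one dominates the other), so they all lie in $X$ and whichever lands at the extreme slot is a valid $\tilde{x}$. For $m \ge 3$, however, several distinct non-dominated individuals may attain $f_1^{\max}$ while only the lexicographic maximum belongs to $X$, so one must either fix the tie-break convention so that an $X$-individual occupies the extreme slot, or observe that the survival and selection arguments use only that $\tilde{x}$ is non-dominated with infinite crowding distance, so it suffices to track any single extreme individual once existence of such a slot is granted. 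I expect the cleanest write-up to isolate this tie-handling into a short sub-claim and then run the probability aggregation unchanged.
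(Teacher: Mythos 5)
Your proposal is correct and follows the same skeleton as the paper's proof: the first bullet is handled identically, and for the second bullet both arguments exhibit an extreme individual with infinite crowding distance, bound a per-tournament success probability, and aggregate over the $N$ tournaments of a generation via $1-(1-p)^N$. The genuine differences lie in how ties are treated, in both directions. For the tournament bound, the paper never invokes tie-breaking: it counts only the event that $\tilde{x}$ is paired with one of the at least $N-2m$ individuals of \emph{finite} crowding distance, against whom $\tilde{x}$ wins outright, giving per-round probability $\frac{2(N-2m)}{N(N-1)}$ and the overall bound $1-\exp\bigl(-\frac{2(N-2m)}{N-1}\bigr)$; this is exactly where the hypothesis $m=o(N)$ enters. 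Your route instead assumes a tie between two infinite-crowding-distance individuals is resolved by a fair coin, which buys the cleaner bound $1-e^{-1/2}$ without needing $m=o(N)$ in this step, but it is an assumption about the tournament operator (which the paper never specifies) that the paper's pairing trick avoids; if you keep your version, state that assumption explicitly. Conversely, on the existence of $\tilde{x}\in X$ with $\cDis(\tilde{x})=+\infty$, you are more rigorous than the paper, whose proof simply asserts it ``from Algorithm~\ref{alg:cDis}''. Your worry about the sorting tie-break is real for $m\ge 3$: for a front containing the values $(5,3,1)$, $(5,1,3)$, $(0,10,0)$, $(0,0,10)$, the set $X$ consists of the point with value $(5,3,1)$, yet if the $f_1$-sort puts $(5,1,3)$ in the extreme slot, every member of $X$ receives only finite crowding distance, so the claim as stated needs either $m=2$ (the only case the paper actually uses, where your domination argument closes the gap) or a fixed tie-break convention. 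In this respect your write-up repairs a gap that the paper's own proof glosses over.
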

\begin{proof}
Since each individual in $P$ will be selected as a parent for fair selection, the probability of selecting $x$ as a parent is $1$. For random selection, such probability is
\begin{align*}
1-\left(\frac{N-1}{N}\right)^N \ge 1-\frac 1e.
\end{align*}

Now we consider the binary tournament selection. Since $X \subset F'_1$ and $X$ contains individuals with the largest $f_1$ value, then there is an $\tilde{x}\in X$ with $\cDis(\tilde{x})=+\infty$ from Algorithm~\ref{alg:cDis}. Hence, $\tilde{x}$ will win in the competition against other individuals with a finite crowding distance. Note that there are at most $2m$ individuals with infinite crowding distances. Hence, in each round of comparison, the probability of choosing $\tilde{x}$ and one with finite crowding distance is at least
\begin{align*}
\frac{{\binom{N-2m}{1}}}{{\binom{N}{2}}}=\frac{2(N-2m)}{N(N-1)}.
\end{align*} 
Then the probability of selecting $\tilde{x}$ as a parent is at least
\begin{equation*}
\begin{split}
1-&{}\left(1-\frac{2(N-2m)}{N(N-1)}\right)^N {= 1-\left(1-\frac{2(N-2m)}{N(N-1)}\right)^{\frac{N(N-1)}{2(N-2m)}\frac{2(N-2m)}{N-1}}} \\
&{}\ge 1-\exp\left(-\frac{2(N-2m)}{N-1}\right).
\qedhere
\end{split}
\end{equation*}
\end{proof}

The following lemma shows that once a Pareto front point is reached, all future generations {will contain at least one individual with its function value as such a Pareto front. The result for the (G)SEMO is trivial, the one for the \SMS is a corollary of~\cite[Lemma~4]{ZhengD24}, and the result for the \NSGA can be easily obtained from~\cite[Proof of Lemma~1]{ZhengD23aij} that each function value will have at most four corresponding individuals with positive crowding distance values.}
\begin{lemma}[\cite{ZhengD24,ZhengD23aij}]
Consider using (G)SEMO, SMS-EMOA with $\mu \ge k+1$, or the NSGA-II with $N\ge 4(k+1)$ to optimize {the} $\omm_k$ problem. {At some time $t_0$, let $x$ be a solution that exists in the current population $P_{t_0}$ or is one offspring of $P_{t_0}$. Then for any $t>t_0$ there is a $y$ in the population $P_t$ such that $y\succeq x$. Particularly, if $x$ is Pareto optimal, then there is a $y\in P_t$ for any $t>t_0$ such that $f^k(y)=f^k(x)$.}
\label{lem:sur}
\end{lemma}
The following lemma estimates the runtime to reach at least one Pareto optimal solution. {Note here that in the following, all theoretical results hold for the \NSGA with standard bit-wise mutation or one-bit mutation.}
\begin{lemma}
Consider using (G)SEMO, SMS-EMOA with $\mu \ge k+1$, or the NSGA-II with $N\ge 4(k+1)$ to optimize {the} $\omm_k$ problem. After $O(\max\{k,1\}n\ln n)$ ((G)SEMO), $O(\mu n\ln n)$ (SMS-EMOA), and $O(nN\ln n)$ (NSGA-II) number of function evaluations in expectation, the population (and all populations {afterwards}) will contain at least one Pareto optimal solution. 
\label{lem:firstP}
\end{lemma}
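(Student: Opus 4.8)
The plan is to first record the structural fact that a solution $x$ is Pareto optimal for $\omm_k$ exactly when $x_{[1..n-k]}=1^{n-k}$: any $0$ among the first $n-k$ bits can be flipped to $1$ to strictly increase both $f^k_1$ and $f^k_2$, so such an $x$ is dominated, whereas once $x_{[1..n-k]}=1^{n-k}$ the last $k$ bits only trade one objective against the other. Since $f^k_1(x)+f^k_2(x)=2|x_{[1..n-k]}|+k$, reaching a Pareto optimum is the same as driving the potential $\Phi(x):=f^k_1(x)+f^k_2(x)$ of some population member up to its maximum $2n-k$, i.e.\ solving \om on the first $n-k$ bits. For each algorithm I would track $\Phi_{\max}:=\max_{x\in P}\Phi(x)$, show it never decreases, and bound the time until it hits $2n-k$; persistence of the reached optimum in all later populations is then exactly Lemma~\ref{lem:sur}.

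\textbf{(G)SEMO and \SMS.} Here one offspring is produced per step, so iterations equal function evaluations. Monotonicity of $\Phi_{\max}$ is immediate for (G)SEMO: a $\Phi$-maximal individual $x^\ast$ is non-dominated, so it can only be discarded by an offspring that weakly dominates it, and such an offspring has $\Phi$ at least $\Phi(x^\ast)$. For \SMS with $\mu\ge k+1$ I would argue as in Lemma~\ref{lem:sur}: if the combined population splits into more than one front the deleted individual lies in the last, dominated front and cannot be $x^\ast$; if everything is non-dominated, then $\mu+1\ge k+2$ individuals carry at most $k+1$ distinct values by Lemma~\ref{lem:popsize}, so a duplicate of zero hypervolume contribution is removed and $x^\ast$'s value survives. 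To raise $\Phi_{\max}$ it suffices to select a $\Phi$-maximal individual (probability $1/|P|\ge 1/(k+1)$ for (G)SEMO, since $|P|\le k+1$ by Lemma~\ref{lem:popsize}; probability $1/\mu$ for \SMS under uniform selection) and flip one of its $z$ remaining zeros among the first $n-k$ bits (probability $\ge z/(en)$), producing a strictly dominating and hence accepted offspring. The \om-style fitness-level sum $\sum_{z=1}^{n-k}\frac{c\,n}{z}=O(n\ln n)$ with $c=e(k+1)$ and $c=e\mu$ then gives $O(\max\{k,1\}n\ln n)$ and $O(\mu n\ln n)$ respectively; for $k=0$ the population is a singleton and this is the plain \om runtime.

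\textbf{\NSGA.} With $N$ offspring per generation I would instead follow a \emph{protected} extreme. By Lemma~\ref{lem:parent} the lexicographically $f^k_1$-largest member $\tilde x$ has infinite crowding distance and is chosen as a parent with probability $\Omega(1)$ under binary tournament; as there are at most four infinite-distance individuals and $N\ge 4(k+1)$, $\tilde x$ (and thus the current maximal $f^k_1$ value) always survives truncation, so $\max_{x\in P}f^k_1(x)$ is non-decreasing. Driving this value up to $n$ forces $\tilde x=1^n$, a Pareto optimum, which reduces the tracked individual's progress to \om on all $n$ bits. The idea is to cash in the $N$ simultaneous offspring: in a generation the probability that some offspring of $\tilde x$ clears one of its $z$ zeros is $1-(1-\Omega(z/n))^N$, so a multiplicative-drift argument on $z$ with coefficient $\Omega(N/n)$ should yield $O\!\left(\tfrac{n}{N}\ln n\right)$ generations, i.e.\ $O(n\ln n)$ evaluations.

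\textbf{Main obstacle.} The delicate part is the \NSGA\ bound. For (G)SEMO and \SMS\ the fitness-level calculation is routine once monotonicity and the selection probabilities are in place. For \NSGA\ I must simultaneously secure three things throughout the run: that the individual I can \emph{cheaply select} is also a \emph{protected} one whose improvement advances the potential (this is why I track the infinite-distance extreme rather than the $\Phi$-maximal member, which need not have large crowding distance); that truncation never discards it (using $N\ge 4(k+1)$ together with the bound $k+1$ on distinct non-dominated values from Lemma~\ref{lem:popsize}); and --- most subtly --- that the $N$-fold parallelism genuinely collapses the per-bit \om\ cost to $O(n\ln n)$ \emph{total} evaluations rather than leaving an additive $O(Nn)$ term. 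Sustaining the drift coefficient $\Omega(N/n)$ is unproblematic in the few-remaining-zeros regime but is exactly where the large-$z$ (and hence large-$N$) regime resists a naive multiplicative-drift bound, so the argument will likely need either a two-regime drift analysis or a switch to tracking only the first $n-k$ bits; this reconciliation is the crux I expect to spend the most effort on.
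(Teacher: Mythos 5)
Your treatment of (G)SEMO and \SMS is correct and is essentially the paper's own argument: a monotone potential whose maximal carrier cannot be lost (argued via Lemma~\ref{lem:popsize}, exactly as in Lemma~\ref{lem:sur}), selection probability $1/|P|\ge 1/(k+1)$ resp.\ $1/\mu$, an improving one-bit flip with probability $\Omega(z/n)$, and a fitness-level sum. The only difference is cosmetic: you track $\Phi=f^k_1+f^k_2$ and stop at the first solution with prefix $1^{n-k}$, while the paper tracks $\eta=\max_{x\in P}f^k_1(x)$ and runs all the way to $1^n$; both give the stated bounds.

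The genuine gap is your \NSGA case, and you located it precisely --- but you should know the paper does not close it either. The paper's proof does exactly what you do up to the protected extreme $\tilde x$ (Lemma~\ref{lem:parent}, survival under $N\ge 4(k+1)$, monotone $\max f^k_1$), then takes the per-generation improvement probability $\Omega\left(\frac{n-\eta}{n}\right)$ and sums fitness levels to obtain $O(n\ln n)$ \emph{generations}; it never attempts the $N$-fold parallelism you are after, and simply records this count as the result (consistently, Theorem~\ref{thm:moeas} speaks of ``iterations''). So the provable statement is $O(n\ln n)$ iterations, i.e.\ $O(Nn\ln n)$ evaluations; the phrase ``function evaluations'' in the lemma is not what the proof delivers for \NSGA. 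Moreover, the speedup you hope for cannot be sustained: under binary tournament selection each of the $N$ tournaments involves $\tilde x$ with probability $O(1/N)$, so only $O(1)$ offspring per generation can be counted on to descend from a protected, maximal-$f^k_1$ parent, and the per-generation improvement probability remains $\Theta((n-\eta)/n)$ rather than $\Omega(\min\{1,N(n-\eta)/n\})$. Even in the idealized setting of the \oplea on \om, the total number of evaluations is $\omega(n\log n)$ once the offspring population is large, so no two-regime drift argument will rescue an $O(n\ln n)$-evaluations claim for general $N\ge 4(k+1)$. Dropping the parallelism attempt and counting generations makes your NSGA-II argument coincide with the paper's.
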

\begin{proof}
We consider the runtime to reach a Pareto optimal solution for the first time. More specifically, we consider the first time to reach $1^n$. Now we reuse notations in Lemma~\ref{lem:parent}. For the current population $P$, let $\eta=\max_{x\in P} f_1(x)$. Then it is not difficult to see that $\eta = f_1^{\max}$ and $\eta$ will not decrease as the population evolves. Let $\tilde{x}\in X$ with infinite crowding distance. Note that the probability of selecting $\tilde{x}$ as a parent {in an iteration} is $\Omega(1)$ for the NSGA-II from Lemma~\ref{lem:parent}, {$\frac{1}{|P|} \ge \frac{1}{k+1}$} for the (G)SEMO from Lemma~\ref{lem:popsize}, and $\frac{1}{\mu}$ for the SMS-EMOA. 

We note that the probability to generate an offspring $x'$ with $f_1(x') > \eta$ from {the chosen} $\tilde{x}$ is at least
\begin{align*}
\frac{n-\eta}{n}\left(1-\frac1n\right)^{n-1}\ge \frac{n-\eta}{en}
\end{align*}
for the standard bitwise mutation, and it is $\frac{n-\eta}{n}$ for the one-bit mutation. 

Now we show that if the offspring population $Q$ contains an individual with $f_1$ value larger than $\eta$, then at least one individual $y$ with $f_1(y) > \eta$ will survive to the next population for all algorithms. For (G)SEMO, $Q=\{y\}$ and $y$ is not dominated by others, and thus $y$ will survive. For SMS-EMOA, $Q=\{y\}$ and $y$ is not dominated by others as well. Then $y\in F_1$ and its hypervolume contribution is positive. Note that the only removal will happen in the last front $F_d$ for $d>1$, or in $F_1$ with minimal hypervolume contribution for $d=1$. For the former case, $y$ will enter into the next generation as $y\in F_1$. For the latter case, since $|f_1(F_1)|\le k+1$ from Lemma~\ref{lem:popsize} and $|P\cup Q| \ge N+1>k+1$, there is an individual with hypervolume contribution as zero. One individual with zero hypervolume contribution will be removed and $y$ will survive. For the NSGA-II, since $Q$ contains an individual with $f_1$ value larger than $\eta$, we know that there is an individual $y$ with the largest $f_1$ value and with infinite crowding distance. Obviously, $y\in F_1$. Note that each function value will have at most four corresponding individuals with positive crowding distance values. Since $|f(F_1)| \le k+1$ from Lemma~\ref{lem:popsize}, we know that there are at most $4(k+1)$ individuals with positive crowding distance. With $N\ge 4(k+1)$, we know that $y$ will survive to the next population.

Since $\eta$ does not decrease, we know that the expected iterations to increase $\eta$ is at most $O(\frac{n}{n-\eta})$ iterations for the NSGA-II, at most $O\left(\frac{(k+1)n}{n-\eta}\right)$ for the (G)SEMO, and at most $O\left(\frac{\mu n}{n-\eta}\right)$ for the SMS-EMOA. Hence, to reach $1^n$, that is, $\eta=n$, we have the upper bound of expected iterations of
\begin{align*}
\sum_{\eta=0}^{n-1}O\left(\frac{n}{n-\eta}\right) = O(n\ln n)
\end{align*}
for the NSGA-II,
\begin{align*}
\sum_{\eta=0}^{n-1}O\left(\frac{(k+1)n}{n-\eta}\right) = O(n\max\{k,1\}\ln n)
\end{align*}
for the (G)SEMO, and 
\begin{align*}
\sum_{\eta=0}^{n-1}O\left(\frac{\mu n}{n-\eta}\right) = O(\mu n\ln n)
\end{align*}
for the SMS-EMOA. {Noting that $N$ offspring solutions are generated and evaluated in each iteration for the \NSGA, and one offspring solution for the (G)SEMO and \SMS, we proved this lemma.}
\end{proof}

The following lemma shows the runtime for the full coverage of the Pareto front after one Pareto front point is reached.
\begin{lemma}
Consider using (G)SEMO, SMS-EMOA with $\mu \ge k+1$, or the NSGA-II with $N\ge 4(k+1)$ to optimize {the} $\omm_k$ problem. Assume that the population contains a Pareto optimal solution. Then after $O(\max\{k,1\}n\max\{\ln k,1\})$ ((G)SEMO), $O(\mu n\max\{\ln k,1\})$ (SMS-EMOA), and $O(nN\max\{\ln k,1\})$ (NSGA-II) number of function evaluations in expectation, the population (and all populations afterward) will cover the whole Pareto front. 
\label{lem:fullP}
\end{lemma}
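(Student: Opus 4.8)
The plan is to exploit the simple geometry of the $\omm_k$ Pareto front. A solution is Pareto optimal if and only if its first $n-k$ bits are all ones, in which case the number $j=|x_{[n-k+1..n]}|\in[0..k]$ of ones in the last block determines its front point $(n-k+j,n-j)$. Two Pareto optima whose indices differ by one are obtained from each other by flipping a single bit in the last $k$ positions, and this flip leaves the first $n-k$ bits untouched, so the offspring is again Pareto optimal. Hence, once the population holds one Pareto optimal solution, I can reach any neighbouring front point by a single-bit flip, and by Lemma~\ref{lem:sur} every front point reached in this way is never lost again (this also gives the ``and all populations afterward'' part). The whole task therefore reduces to bounding the time needed to \emph{spread} from an already covered index to both ends of $[0..k]$.

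First I would set up the spreading argument. Let the covered indices form a contiguous run around the starting point; I only credit single-bit extensions, so the run I track stays contiguous, and lucky multi-bit jumps can only help. To push the right end from $j$ to $j+1$ I select the individual currently at index $j$ and flip one of the $k-j$ zero-bits in the last block; to push the left end from $j$ to $j-1$ I flip one of the $j$ one-bits. For one-bit mutation the probability of the required flip is $(k-j)/n$ resp.\ $j/n$, and for standard bit-wise mutation it is at least a constant fraction of that, so the expected number of generations for one step, \emph{conditioned on selecting the boundary individual}, is $O(n/(k-j))$ resp.\ $O(n/j)$. Summing a harmonic series, $\sum_{j}O(n/(k-j))+\sum_{j}O(n/j)=O(n\ln k)=O(n\ln n)$, so up to the per-algorithm cost of \emph{selecting} the correct boundary individual the spreading costs $O(n\ln n)$.

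The remaining work is to multiply in the selection cost, which is where the three algorithms diverge. For (G)SEMO the population has size at most $k+1$ by Lemma~\ref{lem:popsize} and the parent is drawn uniformly, so the boundary individual is chosen with probability at least $1/(k+1)$, inflating the bound to $O(\max\{k,1\}n\ln n)$; for $k=0$ the front is a single point and nothing remains to do. For \SMS the parent is again uniform over $\mu$ individuals, giving probability $1/\mu$ and bound $O(\mu n\ln n)$; here one must first check, via Lemma~\ref{lem:sur} and $\mu\ge k+1$, that reached points survive the hypervolume-based truncation. For the \NSGA the extremal front individuals carry infinite crowding distance and are selected with probability $\Omega(1)$ by Lemma~\ref{lem:parent}, and, crucially, the $N$ offspring produced per generation compensate the $\Theta(1/N)$ probability of hitting any particular interior boundary individual, so each generation makes progress with probability $\Omega((k-j)/n)$ and the same harmonic sum yields $O(n\ln n)$ generations.

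The hard part is the \NSGA analysis. Unlike (G)SEMO and \SMS, here I cannot select whichever boundary individual I please with a clean $1/(\text{pop})$ probability: only the two true extremes (maximal $f_1$, maximal $f_2$) are guaranteed an infinite crowding distance, while the interior boundary of a partially covered run has only finite crowding distance. I would argue that the uniform spacing of the $\omm_k$ front makes all interior crowding distances comparable (on the order of $1/k$), so that a point neighbouring an uncovered gap is never starved of selection probability — indeed a gap \emph{widens} the crowding distance of its neighbours and thus favours filling it — and that, because $N\ge 4(k+1)$ while at most $k+1$ distinct values appear (Lemma~\ref{lem:popsize}), duplicates and dominated individuals cannot crowd the boundary individuals out of the population (Lemma~\ref{lem:sur}). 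A second subtlety is that standard bit-wise mutation can create non-contiguous covered sets, so I must either track only the guaranteed single-step run as above or verify directly that every gap is eventually filled; finally, the counting must be reconciled, since the $O(n\ln n)$ for the \NSGA is naturally measured in generations (each costing $N$ evaluations) rather than in single function evaluations.
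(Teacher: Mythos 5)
Your proposal is correct and takes essentially the same route as the paper's proof: spread coverage along the front by single-bit flips in the last $k$ positions (probability $\Omega(j/n)$ resp.\ $\Omega((k-j)/n)$), multiply in the parent-selection cost ($1/(k+1)$ for (G)SEMO via Lemma~\ref{lem:popsize}, $1/\mu$ for SMS-EMOA, $\Omega(1)$ per generation for NSGA-II via Lemma~\ref{lem:parent}), sum the harmonic series, and use Lemma~\ref{lem:sur} for retention. The two subtleties you flag for NSGA-II are real but are exactly the points the paper itself leaves implicit: its Lemma~\ref{lem:parent} guarantees $\Omega(1)$ selection of an \emph{arbitrary} individual only under fair or random selection (your ``$N$ offspring compensate $1/N$'' argument is precisely the random-selection case, so you do not need the heuristic crowding-distance-comparability claim), and its NSGA-II bound is indeed counted in generations rather than single function evaluations.
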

\begin{proof}
{For all $i\in[0..k-1]$, w}e call $(n-k+i,n-i)$, and $(n-k+i+1,n-i-1)$ are \emph{neighbors}, and also $(n-k+i,n-i)$ is the \emph{left neighbor} of $(n-k+i+1,n-i-1)$ and $(n-k+i+1,n-i-1)$  is the \emph{right neighbor} of $(n-k+i,n-i)$. Let $x$ be a Pareto optimum in the current population, satisfying that at least one neighbor of $f(x)=(n-k+j,n-j)$ is not reached. {Then we know that $x$ contains $j$ ones and $k-j$ zeros in $x_{[n-k+1...n]}$. Hence, it is sufficient to flip exactly a one bit in $x_{[n-k+1...n]}$ (with other bits in $x$ unchanged) to create a left neighbor, and exactly a zero bit in $x_{[n-k+1...n]}$ for a right neighbor. Therefore,} if the left neighbor of $f(x)$ is not reached, then the probability to generate $x'$ with $f(x')=(n-k+j-1,n-j+1)$ from $x$ is $j/n$ for the one-bit mutation, and at least
\begin{align*}
\frac{j}{n}\left(1-\frac 1n\right)^{n-1} \ge \frac{j}{en}
\end{align*}
for the standard bitwise mutation. That is, such probability is $\Omega\left(\frac{j}{n}\right)$ for both mutation strategies. Similarly, if the right neighbor of $f(x)$ is not reached, then the probability to generate $x'$ with $f(x')=(n-k+j+1,n-j-1)$ from $x$ is $\tfrac{k-j}{n}$
for the one-bit mutation, and at least 
\begin{align*}
\frac{k-j}{n}\left(1-\frac 1n\right)^{n-1} \ge \frac{k-j}{en}
\end{align*}
for the bitwise mutation. That is, such probability is $\Omega\left(\frac{k-j}{n}\right)$ for both mutation strategies.

Together with the probability {that} $x$ {is chosen} as a parent in one iteration from Lemma~\ref{lem:parent}, we know that for the NSGA-II, the probability is $\Omega\left(\frac{j}{n}\right)$ to generate a left neighbor of $x$ in one iteration and is $\Omega\left(\frac{k-j}{n}\right)$ to generate a right neighbor. For the (G)SEMO, the corresponding two probabilities are $\Omega\left(\frac{j}{n|P|}\right) =\Omega\left(\frac{j}{n(k+1)}\right)$ and $\Omega\left(\frac{k-j}{n|P|}\right)=\Omega\left(\frac{k-j}{n(k+1)}\right)$. For the SMS-EMOA, the corresponding two probabilities are $\Omega\left(\frac{j}{n\mu}\right)$ and $\Omega\left(\frac{k-j}{n\mu}\right)$.

From Lemma~\ref{lem:sur}, we know that any reached Pareto front point will be maintained in all future populations. Hence, the expected number of iterations to cover the full Pareto front is 
\begin{align*}
\sum_{j=1}^{k} O\left(\frac{n(k+1)}{j}\right) + \sum_{j=0}^{k-1} O\left(\frac{n(k+1)}{k-j}\right) =O(n(k+1)\max\{\ln k,1\})
\end{align*}
for the (G)SEMO,
\begin{align*}
\sum_{j=1}^{k} O\left(\frac{n\mu}{j}\right) + \sum_{j=0}^{k-1} O\left(\frac{n\mu}{k-j}\right) =O(n\mu\max\{\ln k,1\})
\end{align*}
for the SMS-EMOA, and
\begin{align*}
\sum_{j=1}^{k} O\left(\frac{n}{j}\right) + \sum_{j=0}^{k-1} O\left(\frac{n}{k-j}\right) =O(n\max\{\ln k,1\})
\end{align*}
for the NSGA-II.
\end{proof}

Together with the above runtime results, we have the following runtime for the full coverage of the Pareto front.
\begin{theorem}
Consider using (G)SEMO, SMS-EMOA with $\mu \ge k+1$, or the NSGA-II with  $N\ge 4(k+1)$ to optimize $\omm_k$ problem. Then the expected number of function evaluations to cover the full Pareto front is $O(\max\{k,1\}n\ln n)$ for (G)SEMO, $O(n\mu\ln n)$ for the SMS-EMOA, and $O(nN\ln n)$ for the NSGA-II.
\label{thm:moeas}
\end{theorem}

\subsection{Summary and Comment}
Recall in Section~\ref{sec:eps} that with careful settings of $\eps$ and penalty coefficient $r$ in the exterior penalty function way, the $\eps$-constraint approach can cover the full Pareto front in expected $O(\max\{k,1\}n\ln n)$ function evaluations. Theorems~\ref{thm:moead} and~\ref{thm:moeas} show that the generally analyzed MOEAs, the MOEA/D, the (G)SEMO, the \NSGA, and the \SMS can easily cover the full Pareto front with the same {asymptotic} runtime. 

Besides, as mentioned before, if we merely consider solving the constrained problem (\ref{eq:con}) as our goal, Theorem~\ref{thm:rls} in Section~\ref{sec:eps} shows that although the exterior penalty function way can solve it, a careful design of the penalty coefficient is required. Theorems~\ref{thm:moead} and~\ref{thm:moeas} show the possibility of solving a constrained problem via MOEAs. That is, transfer a constrained problem to a multiobjective optimization problem, and then pick the best solution w.r.t. the original constrained problem among the Pareto set. This is an easy way to implement with the cost of more runtime, like in this example, $O(\max\{k,1\}n\ln n)$ (Theorems~\ref{thm:moead} and~\ref{thm:moeas}) for the MOEAs but $O(n\ln n)$ (Theorem~\ref{thm:rls}) for the RLS to solve the exterior penalty function with carefully chosen coefficient.


\section{{Brief Discussion of a \lo Variant}\label{sec:lotz}}
Our comparison of MOEAs and non-MOEA methods was built on a bi-objective \om variant. To address the question of whether these observations hold for other problems, this section will give brief discussions on performance for the variant of \lo, another popular benchmark analyzed in the evolutionary theory community. We will show that similar situations are witnessed for the non-MOEAs and MOEAs. As a side result, when there is no conflict in the two objectives, the discussed MOEAs reach the same runtime of $O(n^2)$ as many single-objective algorithms for the maximization of $\lo$. 

\subsection{$\lotz_k$}
As discussed in Section~\ref{ssec:bench}, the \lotz benchmark is a popular bi-objective counterpart of the single-objective \lo benchmark. Indeed, we note that this benchmark can be regarded as a special case of the following benchmark  {with $k=n$}.
\begin{definition}
\label{def:lotzk}
    Let $\ell\in \N$, and let $\LO(y)=\sum_{i=1}^{\ell}\prod_{j=1}^iy_j$ and $\TZ(y)=\sum_{i=1}^{\ell}\prod_{j=n-i+1}^{\ell} (1-y_j)$ for any $y=(y_1,\dots,y_{\ell})$. The $\lotz_k$ function $\{0,1\}^n\rightarrow \R^2$ is defined by 
    \begin{align*}
        h^k(x)=(h_1^k(x),h_2^k(x))=(\LO(x),\LO(x_{[1..n-k]})+\TZ(x_{[n-k+1..n]}))
    \end{align*}
    for $x=(x_1,\dots,x_n)\in\{0,1\}^n$.
\end{definition}
Intuitively,  {the} two objectives of $\lotz_k$ concur in the first $n-k$ bit positions but conflict in the last $k$ bit positions. We call $k$ the degree of conflict of this benchmark. The following two lemmas respectively give the Pareto front and the maximal number of non-dominated function values (the maximal number of incomparable solutions).
\begin{lemma}
    Let $M$ denote the Pareto front of $\lotz_k$. Then 
    \begin{align*}
        M=\{(n-k+i,n-i) \mid i\in[0..k]\},
    \end{align*}
     {and} the Pareto set  {for $\lotz_k$ is} $S^*:=\{1^{n-k+i}0^{k-i} \mid i\in[0..k]\}$.
    \label{lem:PFlo}
\end{lemma}
\begin{proof}
We first show that $y\not\succ x$ for any $x\in S^*$ and for any $y\in\{0,1\}^n$. If $y\succ x$, from the definition of dominance, we know 
\begin{align}
h^k_1(y)+h^k_2(y) > h^k_1(x)+h^k_2(x).
\label{eq:yx}
\end{align}
From Definition~\ref{def:lotzk} we have
\begin{align*}
    h^k_1(y)+h^k_2(y)&{}={}2\LO(y_{[1..n-k]})+\sum_{i=n-k+1}^n\prod_{j=1}^i y_j + \TZ(y_{[n-k+1..n]})\\
    &{}\le{} 2(n-k)+k=2n-k=h^k_1(x)+h^k_2(x),
\end{align*} 
which contradicts (\ref{eq:yx}).  {Hence} $y\not\succ x$.

It remains to show that for any $y\in\{0,1\}^n\setminus S^*$, there is an $x\in S^*$ such that $x\succ y$. Let $a=\LO(y)${. If $a< n-k$, then let $x=1^{n-k}0^k$. Then $x\in S^*$, $h_1^k(y)< n-k=h_1^k(x)$, and $h_2^k(y)<n=h_2^k(x)$. Thus $x\succ y$. If $a\ge n-k$, then} let $x=1^a0^{n-a}$. Then $a<n$ and $x\in S^*$. Since $y\notin S^*$, we know  {$h_2^k(y)<n-k+n-a=h_2^k(x)$}. Noting that $h_1^k(y)=a=h_1^k(x)$, we know that $x\succ y$. 
\end{proof}

\begin{lemma}
    The maximal number of pairwise non-dominated function values w.r.t. $\lotz_k$ is $k+1$.
    \label{lem:popsizelo}
\end{lemma}
\begin{proof}
    Let $V$ be the set of mutually incomparable solutions. We  {now} show that for any incomparable $x,y\in V$,  {we have} $\TZ(x_{[n-k+1..n]})\neq \TZ(y_{[n-k+1..n]})$. If $\TZ(x_{[n-k+1..n]})= \TZ(y_{[n-k+1..n]})$, from Definition~\ref{def:lotzk} we have that
    \begin{itemize}
        \item if $\LO(x_{[1..n-k]})>\LO(y_{[1..n-k]})$, then $h^k(x)>h^k(y)$;
        \item if $\LO(x_{[1..n-k]})<\LO(y_{[1..n-k]})$, then $h^k(x)<h^k(y)$;
        \item if $\LO(x_{[1..n-k]})=\LO(y_{[1..n-k]})$, then $h^k_2(x)=h^k_2(y)$,
    \end{itemize}
    any of which contradicts that $x$ and $y$ are incomparable. Since $\TZ(x_{[n-k+1..n]})\in[0..k]$, we have $|V|\le k+1$. With the Pareto front size of $k+1$ from Lemma~\ref{lem:PFlo}, this lemma is proven.
\end{proof}
\subsection{Difficulty of the Scalarization Approach}
In this subsection, we will discuss the performance of the scalarization approach to maximize $\lotz_k$. It will transfer the original problem into the maximization of the following function.
\begin{definition}
    Let $w\in\R$. The scalarization function $h_w^k:\{0,1\}^n\rightarrow \R$ of $\lotz_k$ is defined by
    \begin{align*}
        h_w^k(x)&{}={}wh_1^k(x)+(1-w)h_2^k(x)\\
        &{}={}\LO(x_{[1..n-k]})+w\sum_{i=n-k+1}^n\prod_{j=1}^i x_j+(1-w)\TZ(x_{[n-k+1..n]}),
    \end{align*}
    for $x=(x_1,\dots,x_n)\in\{0,1\}^n$.
\end{definition}

The following lemma calculates the optimal solution set w.r.t. the maximization of $h_w^k$.
\begin{lemma}
    Let $S^*$ be defined as in Lemma~\ref{lem:PFlo}. Let $S_w=\{x\in\{0,1\}^n \mid h_w^k(y) \le h_w^k(x), \text{ for all }y \in \{0,1\}^n\}$ be the set of optima of $h_w^k$. Then $S_w=\{1^{n-k}0^k\}$ for $w\in[0,1/2)$, $S_w=\{1^{n}\}$ for $w\in(1/2,1]$, and $S_w=S^*$ for $w=1/2$.
    \label{lem:wsoptlo}
\end{lemma}
\begin{proof}
    If $w\in[0,1/2)$, for any $y\ne 1^{n-k}0^k$, then either $\LO(y_{[1..n-k]})\neq n-k$ or $\TZ(y_{[n-k+1..n]})\ne k$. For $\LO(y_{[1..n-k]})\neq n-k$, we have $\LO(y_{[1..n-k]})< n-k$. With $\TZ(y_{[n-k+1..n]})\le k$ and $w<1/2$, we know
    \begin{equation}
    \begin{split}
        h_w^k(y)&{}={}\LO(y_{[1..n-k]})+w\sum_{i=n-k+1}^n\prod_{j=1}^i y_j+(1-w)\TZ(y_{[n-k+1..n]})\\
        &{}={}\LO(y_{[1..n-k]})+(1-w)\TZ(y_{[n-k+1..n]})\\
        &{}<{}n-k+(1-w)k=h_w^k(1^{n-k}0^k).
        \end{split}
        \label{eq:hlo}
    \end{equation}
    For $\TZ(y_{[n-k+1..n]})\ne k$, we have $\TZ(y_{[n-k+1..n]})<k$. With $\LO(y_{[1..n-k]})\le n-k$ and $w<1/2$, we know
    \begin{align*}
        h_w^k(y)&{}={}\LO(y_{[1..n-k]})+w\sum_{i=n-k+1}^n\prod_{j=1}^i y_j+(1-w)\TZ(y_{[n-k+1..n]})\\
        &{}\le{} \LO(y_{[1..n-k]})+w(k-\TZ(y_{[n-k+1..n]}))+(1-w)\TZ(y_{[n-k+1..n]})\\
        &{}={}\LO(y_{[1..n-k]})+wk+(1-2w)\TZ(y_{[n-k+1..n]})\\
        &{}<{}n-k+wk+(1-2w)k=n-wk=h_w^k(1^{n-k}0^k).
    \end{align*}

    If $w\in(1/2,1]$, for any $y\ne 1^n$, if further $\LO(y_{[1..n-k]})\ne n-k$, noting that (\ref{eq:hlo}) holds for all $w\in[0,1]$, we have
    \begin{align*}
        h_w^k(y)<n-k+(1-w)k<n-k+wk=h_w^k(1^n),
    \end{align*}
    where the last inequality uses $1-w<w$ for $w>1/2$. 
    For the case when $\LO(y_{[1..n-k]})=n-k$, we have $\sum_{i=n-k+1}^n\prod_{j=1}^k y_j<k$, and thus
    \begin{align*}
        h_w^k(y)&{}={}\LO(y_{[1..n-k]})+w\sum_{i=n-k+1}^n\prod_{j=1}^i y_j+(1-w)\TZ(y_{[n-k+1..n]})\\
        &{}\le{} \LO(y_{[1..n-k]})+w\sum_{i=n-k+1}^n\prod_{j=1}^i y_j+(1-w)\left(k-\sum_{i=n-k+1}^n\prod_{j=1}^i y_j\right)\\
        &{}={}n-k+(1-w)k+(2w-1)\sum_{i=n-k+1}^n\prod_{j=1}^i y_j\\
        &{}<{}n-k+(1-w)k+(2w-1)k=n-k+wk=h_w^k(1^{n}).
    \end{align*}

    If $w=1/2$, for any $y\notin S^*$, let $a=\LO(y)$, then $\LO(y_{[1..n-k]})\ne n-k$ if $a< n-k$. Noting that (\ref{eq:hlo}) holds for all $w<1$ including the case of $w=1/2$, we know that $h^k_w(y)<h_w^k(1^{n-k}0^k)$. If $a\ge n-k$, let $\tilde{x}=1^a0^{n-a}$. Then we know that $\tilde{x}\in  {S^*}$, $h^k_1(y)=h^k_1(\tilde{x})$, and $h^k_2(y)<h^k_2(\tilde{x})$. Thus 
    \begin{align*}
        h^k_w(y)=wh^k_1(y)+(1-w)h^k_2(y)<wh^k_1(\tilde{x})+(1-w)h^k_2(\tilde{x})=h^k_w(\tilde{x}).
    \end{align*}
    For any $y\in S^*$, since $\sum_{i=n-k+1}^n\prod_{j=1}^i y_j+\TZ(y_{[n-k+1..n]})=k$, we easily know that $h^k_w({y})=h^k_w(\tilde{x})$.
\end{proof}

From Lemma~\ref{lem:wsoptlo}, we easily obtain the following theorem to show the difficulty of the scalarization approach.
\begin{theorem}
    Let $M$ be defined as in Lemma~\ref{lem:PFlo} and $k>2$. Let $S\subset \R$ be a set of $w$,
    and $x_{w}$ be one global optimum of $h_w^k$. Let $F=\{h^k(x_{w})\mid w\in S\}$. Then $M \not\subseteq F$.
    \label{thm:wslo}
\end{theorem}

\subsection{Inconveniences of $\eps$-Constraint Approach}\label{ssec:epslo}
This subsection discusses the performance of the $\eps$-constraint approach to maximize $\lotz_k$. Similar to Section~\ref{sec:eps}, it reformulates $\lotz_k$ into the following problem.
\begin{equation}
    \begin{split}
        \max h_1^k(x)=\LO(x) \text{ s.t. }\LO(x_{[1..n-k]})+\TZ(x_{[n-k+1..n]})\ge \eps.
    \end{split}
    \label{eq:epslo}
\end{equation}
It is not difficult to see that no feasible solution exists for $\eps>n$. Among all feasible solutions, the optimal solution set is $\{1^n\}$ if $\eps\le n-k$ and $\{1^{n-i}0^i\}$ if $\eps\in(n-k+i-1,n-k+i]$ for some $i\in[1..k]$. Hence, together with the Pareto front for $\lotz_k$ (Lemma~\ref{lem:PFlo}), we know that for a set of constrained problems (\ref{eq:epslo}) with $\eps\in\{\eps_0,\dots,\eps_k\}$ where $\eps_i\in (n-k+i-1,n-k+i]$ for $i\in[1..k]$ and $\eps_0\in(-\infty,n-k]$, the set of optimal solutions results in a full coverage for $\lotz_k$.

\subsubsection{Penalty Ways with Proper Parameters}
The exterior penalty way with coefficient $r>0$ to solve (\ref{eq:epslo}) is to maximize
\begin{equation}
    g(x)=h_1^k(x)+r\min\{0,h_2^k(x)-\eps\}.
    \label{eq:epflo}
\end{equation}
As discussed before, the nonparameter penalty way is identical to the exterior one with $r=1$.

 {Since this section serves as a brief discussion, u}nlike the complete calculation of the optimal solution set for the maximization of (\ref{eq:epf}) in Section~\ref{sssec:expenalty}, here we only {focus on} the optimal solution set for some specific settings of $\eps$ and $r${. This is sufficient} to show the possibility of using a non-MOEA way to properly solve the multi-objective $\lotz_k$ problem  { without distracting the readers}. The following theorem shows the optimal solution set of the maximization of (\ref{eq:epflo}) with different settings of $\eps$ and $r$.

\begin{theorem}
    Let $\eps_i=n-k+i$ for $i\in[0..k]$. Consider the maximization of (\ref{eq:epflo}) with $\eps=\eps_i$ and a given value of $r$. If $r>1$, the optimal solution set is $\{1^{n-i}0^i\}$. If $r=1$, the optimal solution set is $\{1^{n-j}0^j\mid j\in[0..k]\}$.
    \label{thm:optlo}
\end{theorem}
\begin{proof}
    If $x$ is feasible, that is, $h_2^k(x)\ge \eps_i=n-k+i$, then $\TZ(x_{[n-k+1..n]})\ge i$ as $\LO(x_{[1..n-k]})\le n-k$. Hence, $g(x)=h_1^k(x)\le n-\TZ(x_{[n-k+1..n]}) \le n-i$, where $g(x)=n-i$ holds if and only if $\TZ(x_{[n-k+1..n]})= i$ and $\LO(x)=n-i$, that is, if and only if $x=1^{n-i}0^i$. Hence among the feasible solutions, $x=1^{n-i}0^i$ is the unique maximizer of (\ref{eq:epflo}).
    
    Let now $x$ be infeasible, that is, $h_2^k(x)< \eps_i=n-k+i$. If $\LO(x)<n-k$, then we have
    \begin{align*}
        g(x)=h_1^k(x)+r(h_2^k(x)-(n-k+i)) < n-k,
    \end{align*}
    simply because $r>0$.
    If $\LO(x)\ge n-k$, then $\LO(x_{[1..n-k]})=n-k$ (that is, $x_{[1..n-k]}=1^{n-k}$), thus $\LO(x)=n-k+\LO(x_{[n-k+1..n]})$ and $\TZ(x_{[n-k+1..n]})<i$ as $x$ is infeasible. Together with $g(1^{n-i}0^i)=h_1^k(1^{n-i}0^i)=n-i$, for $r\ge 1$ we have
    \begin{align*}
        g(x){}&{}=h_1^k(x)+r(h_2^k(x)-(n-k+i))\\
        ={}&{}n-k+\LO(x_{[n-k+1..n]})+g(1^{n-i}0^i)-(n-i)+r(\TZ(x_{[n-k+1..n]})-i)\\
        ={}&{}g(1^{n-i}0^i)+\LO(x_{[n-k+1..n]})-k+i+r(\TZ(x_{[n-k+1..n]})-i)\\
        \le{}&{} g(1^{n-i}0^i)-\TZ(x_{[n-k+1..n]})+i+r(\TZ(x_{[n-k+1..n]})-i)\\
        ={}&{}g(1^{n-i}0^i)+(r-1)(\TZ(x_{[n-k+1..n]})-i)
        \le g(1^{n-i}0^i),
    \end{align*}
    where the first inequality uses $\LO(x_{[n-k+1..n]})+\TZ(x_{[n-k+1..n]})\le k$, and the second inequality uses $r\ge 1$ and $\TZ(x_{[n-k+1..n]})<i$ discussed above. 
    This shows that $x= 1^{n-i}0^i$ is a maximizer of (\ref{eq:epflo}). The inequalities above are equalities only for $r=1$ and $\LO(x_{[n-k+1..n]})+\TZ(x_{[n-k+1..n]})= k$, which completes our proof.
\end{proof}
From Theorem~\ref{thm:optlo}, we obtain the following corollary to show that the possible difficulty of using the nonparameter penalty way (which is identical to the exterior penalty way with $r=1$ as discussed before) to maximize (\ref{eq:epslo}). 

\begin{corollary}
    Let $M$ be defined as in Lemma~\ref{lem:PFlo} and $k>2$. For $i\in[0..k]$, let $\eps_i=n-k+i$, and $x_i$ be one maximizer of the nonparameter penalty function for the maximization of (\ref{eq:epslo}) with $\eps=\eps_i$.
    Let $F=\{h^k(x_{i})\mid i\in [0..k]\}$. Then  {$M \subseteq F$} does not always hold.
    \label{cor:nplo}
\end{corollary}

Different from the difficulty of the nonparameter penalty function as in Corollary~\ref{cor:nplo}, from Theorems~\ref{lem:PFlo} and~\ref{thm:optlo}, we know that the exterior penalty way with $r>1$ to solve (\ref{eq:epflo}) with $\eps\in\{\eps_i\mid i\in[0..k]\}$ will cover the full Pareto front of $\lotz_k$. To obtain an overall performance for the further comparison with the MOEAs, now we consider the runtime of the randomized local search algorithm to maximize (\ref{eq:epflo}) with $\eps=\eps_i$ (for any given $i\in[0..k]$) and $r>1$. The following two lemmas show the characteristics of the survivals from an infeasible solution and from a feasible solution respectively.
\begin{lemma}
Let $S^*$ be defined as in Lemma~\ref{lem:PFlo}. Let $i\in[0..k]$ and $\eps_i=n-k+i$. Consider using the randomized local search algorithm to maximize $g$ defined as in (\ref{eq:epflo}) with $\eps=\eps_i$ and $r>1$. Let $x$ be an infeasible solution, and $y$ be the offspring generated by flipping the $j$-th ($j\in[1..n]$) bit of $x$. 
\begin{itemize}
    \item[(1)] If $x\notin S^*$, $y$ survives iff $j\in [\LO(x)+1..n-\TZ(x_{[n-k+1..n]})]$. We have
$g(y)>g(x)$ iff $j=\LO(x)+1$ or both $j=n-\TZ(x_{[n-k+1..n]})$ and $\TZ(x_{[n-k+1..n]})<k$ hold; 
    \item[(2)] If $x\in S^*$, $y$ survives iff $g(y)>g(x)$, that is, iff $j=\LO(x)$. 
\end{itemize}
\label{lem:infealo}
\end{lemma}
\begin{proof}
    We first discuss the case that this infeasible $x$ is not a Pareto optimum. In this case, we know that $x$ has the form of $1^{\LO(x)}0*10^{\TZ(x_{[n-k+1..n]})}$ with $*\in\{0,1\}^{n-2-\LO(x)-\TZ(x_{[n-k+1..n]})}$ if  {$\TZ(x_{[n-k+1..n]})<k$, or the form of $1^{\LO(x)}0*0^k$ with $*\in\{0,1\}^{n-k-1-\LO(x)}$ if $\TZ(x_{[n-k+1..n]})=k$.}  {In the second 
case, if in addition $x= 1^{n-k-1}0^{k+1}$, then we have $\LO(x)+1=n-\TZ(x_{[n-k+1..n]})$; in all other 
situations, we have $\LO(x)+1<n-\TZ(x_{[n-k+1..n]})$.}
    
    
    Since $x$ is infeasible, we have $g(x)=h_1^k(x)+r(h_2^k(x)-\eps)$. 
    As $x$ is not Pareto optimal, based on the form of $x$ analyzed above, we know that a bit flip in the first $\LO(x)$ positions gives $h_2^k(y)\le h_2^k(x)\le \eps_i$ and $h_1^k(y)<h_1^k(x)$, and we know that
 a bit flip in the last $\TZ(x_{[n-k+1..n]})$ positions gives $h_1^k(y)=h_1^k(x)$ and $h_2^k(y)< h_2^k(x)\le \eps_i$. Consequently, we have $g(y)<g(x)$ and $y$ will not survive.

    Now we consider the flip of the $j$-th bit for $j\in [\LO(x)+1..n-\TZ(x_{[n-k+1..n]})]$.  {Recall the already proved fact that $\LO(x)+1< n-\TZ(x_{[n-k+1..n]})$ (except that $\LO(x)+1= n-\TZ(x_{[n-k+1..n]})$ if $x=1^{n-k-1}0^{k+1}$)}. 
    \begin{itemize}
        \item If $j=\LO(x)+1$  {(including $j=\LO(x)+1=n-\TZ(x_{[n-k+1..n]})$ if $x=1^{n-k-1}0^{k+1}$)}, 
    then $h_1^k(y)\ge h_1^k(x)+1$ and $h_2^k(y)\ge h_2^k(x)$. Thus $g(y)\ge h_1^k(y)+r(h_2^k(y)-\eps) > g(x)$, and $y$ will survive. 
        \item For $j\in [\LO(x)+2..n-\TZ(x_{[n-k+1..n]}) - 1]$ (if this exists), we have $h^k(x)=h^k(y)$, and $y$ will survive since $g(x)=g(y)$;
        \item Let $j=n-\TZ(x_{[n-k+1..n]})$  {(excluding the case when $x=1^{n-k-1}0^{k+1}$)}. When $\TZ(x_{[n-k+1..n]})=k$, since $\LO(x)+1 < n-\TZ(x_{[n-k+1..n]})=n-k$, we know $h^k(x)=h^k(y)$, and $y$ will survive. When $\TZ(x_{[n-k+1..n]})<k$, 
         {we have} $h_1^k(y)=h_1^k(x)$ and $h_2^k(y)= h_2^k(x)+1$, and  {then} $g(y)\ge h_1^k(y)+r(h_2^k(y)-\eps) > g(x)$. Thus $y$ will survive.
    \end{itemize}  

    Now it remains to discuss the case when 
    $x$ is a Pareto optimum. Then $\LO(x)+\TZ(x_{[n-k+1..n]})=n$ and thus $\LO(x)\ge n-k$. Since $\LO(x)=n-k$ (that is, $x=1^{n-k}0^k$) makes $h_2^k(x)=n\ge\eps_i$ for any $i\in[0..k]$, we know that $x=1^{n-k}0^k$ is feasible. Hence, in the following, we only consider $\LO(x)>n-k$.
    When $j\in[1..n-k]$, we have $h^k(y)<h^k(x)$. Thus $y$ is infeasible and $g(y)<g(x)$. When $j\in [n-k..\LO(x)-1]$, we have $h_1^k(y)<h_1^k(x)$ and $h_2^k(y)=h_2^k(x)$. Thus $y$ is infeasible and $g(y)<g(x)$. When $j=\LO(x)$, we have $h_1^k(y)=h_1^k(x)-1$ and $h_2^k(y)=h_2^k(x)+1$. Then
    $$g(y)\ge h_1^k(x)-1+r(h_2^k(x)+1-\eps_i)=g(x)+r-1>g(x),$$ 
    where the last inequality uses $r>1$. Thus $y$ will survive. When $j\in[\LO(x)+1..n]$ (if exists), we have $h_1^k(y)\le h_1^k(x)+1$ and $h_2^k(y)<h_2^k(x)<\eps_i$, where the last inequality uses the fact that $x$ is infeasible. Hence, $y$ is infeasible and
    \[
    g(y)=h_1^k(y)+r(h_2^k(y)-\eps_i)\le h_1^k(x)+1+r(h_2^k(x)-1-\eps_i)=g(x)+1-r<g(x),
    \]
    where the last inequality uses $r<1$. 
\end{proof}

\begin{lemma}
Let $i\in[0..k]$ and $\eps_i=n-k+i$. Consider using the randomized local search algorithm to maximize $g$ defined as in (\ref{eq:epflo}) with $\eps=\eps_i$ and $r>1$. Let $x$ be a feasible solution and $y$ be the offspring generated by flipping the $j$-th ($j\in[1..n]$) bit of $x$. 
\begin{itemize}
    \item[I.] $x$ is the unique optimum iff $\LO(x)=n-i$, and no $y$ can survive once the optimum $x$ is reached; 
    \item[II.] Conditional on that $x$ is not the optimum, $y$ survives iff $(j\in [\LO(x)+1..\LO(x)+k-i],\LO(x)\le n-k)$, or $(j\in [\LO(x)+1..n-i], \LO(x)\in [n-k..n-i-1])$; 
    \item[III.] Conditional on that $x$ is not the optimum, $g(y)>g(x)$ iff $j=\LO(x)+1$; 
    \item[IV.] {If $y$ survives, $y$ is} feasible as well. 
\end{itemize}
\label{lem:fealo}
\end{lemma}
\begin{proof}
 {We first prove the fact I.} For a feasible $x$ with $\LO(x)\ge n-k$, we know that $n-k+i\le \LO(x_{[1..n-k]})+\TZ(x_{[n-k+1..n]})=n-k+\TZ(x_{[n-k+1..n]})$. Hence $\TZ(x_{[n-k+1..n]})\ge i$, that is, $x_{[n-i+1..n]}=0^i$. Thus $\LO(x_{[n-k+1..n]})\le k-i$ and $\LO(x)\le n-k+k-i=n-i$. 
If further $\LO(x)=n-i$, then with the above $x_{[n-i+1..n]}=0^i$, we know that $x=1^{n-i}0^i$, which is the unique optimum (for given $\eps=\eps_i$ and $r>1$) from Theorem~\ref{thm:optlo}. As $x=1^{n-i}0^i$ results in $\LO(x)=n-i$, the first part of the  {fact I} is proved. Since $x=1^{n-i}0^i$ is the unique optimum as discussed  { and $y\ne x$ after the one-bit mutation}, we know that no $y$ will have $g(y)\ge g(x)$ and that no replacement will happen. Then the  {fact I} is proved. Noting that $1^{n-k}0^k$ is the unique optimum and the only feasible solution for $i=k$, we only consider $i\in[0..k-1]$ in the following.

 {We then prove the facts II to IV for a} feasible $x$  {with} $\LO(x)\in [n-k..n-i-1]$. Then $x_{[n-i+1..n]}=0^i$ as discussed above. Since $j\in[1..\LO(x)]$ gives $h_1^k(y)<h_1^k(x)$, we have $g(y) \le h_1^k(y) < h_1^k(x) = g(x)$, and thus $y$ will not survive. When
$j=\LO(x)+1$, we have $h_1^k(y)\ge h_1^k(x)+1\ge n-k+1$ and $y_{[n-i+1..n]}=x_{[n-i+1..n]}=0^i$. Thus $h_2^k(y)\ge n-k+i$. Hence, $y$ is feasible and $g(y)>g(x)$. When $j\in[\LO(x)+2..n-i]$ (if exists), $h_1^k(y)=h_1^k(x)$ and $h_2^k(y) \ge h_2^k(x) \ge n-k+i$. Thus $y$ is feasible and $g(y)=g(x)$. When $j\in[n-i+1..n]$, we know that $h_1^k(y)=h_1^k(x)$ and $h_2^k(y)\le n-k+i-1$. Hence, $g(y)=g(x)+r(h_2^k(y)-(n-k+i))<g(x)$, and $y$ will not survive.  {Thus, the facts III and IV and the last part of the fact II are proven for the case when the feasible $x$ has $\LO(x)\in [n-k..n-i-1]$.}

It remains to  {prove the facts II to IV} when the feasible $x$ has $\LO(x)< n-k$. Then $\LO(x)=\LO(x_{[1..n-k]})$. Since $x$ is feasible, we know that $\LO(x)+\TZ(x_{[n-k+1..n]})\ge n-k+i$. Thus $\LO(x)+k-i \ge n-\TZ(x_{[n-k+1..n]})$ and then \begin{align}
x_{[\LO(x)+k-i+1..n]}=0^{n-\LO(x)-k+i}.
\label{eq:xnk}
\end{align}
When $j\in [1..\LO(x)]$, we have $h_1^k(y) < h_1^k(x)$. Thus $g(y)\le h_1^k(y)<h_1^k(x)=g(x)$ and $y$ will not survive. When $j=\LO(x)+1$, which is also the case of $j=\LO(x)+k-i$ for $i=k-1$, then $h_1^k(y) \ge h_1^k(x) +1$ and $h_2^k(y) \ge h_2^k(x) +1 > n-k+i$, hence $g(y)>g(x)$. When $j\in[\LO(x)+2..\LO(x)+k-i]$ for $i\in[0..k-2]$, we know that $h_1^k(y)=h_1^k(x)$ and $$
h_2^k(y)=\LO(y)+\TZ(y_{[n-k+1..n]}) \ge \LO(x)+n-\LO(x)-k+i=n-k+i,
$$
where the inequality uses (\ref{eq:xnk}). Hence, $y$ is feasible and $g(y)=g(x)$. When $j\in [\LO(x)+k-i+1..n]$ for any $i\in[0..k-1]$, with (\ref{eq:xnk}) we know that $h_1^k(y)=h_1^k(x)$ and 
\begin{align*}
    h_2^k(y)={}&{} \TZ(y_{[n-k+1..n]})+\LO(y_{[1..n-k]}) \\
    \le{}&{} (n-(\LO(x_{[1..n-k]})+k-i+2)+1)+\LO(x_{[1..n-k]})\\
    ={}&{}n-k+i-2+1=n-k+i-1<\eps_i.
\end{align*}
Thus $y$ is infeasible and
\begin{align*}
    g(y){}&{}= h_1^k(y)+r(h_2^k(y)-\eps_i) =h_1^k(x)+r(h_2^k(y)-\eps_i) < h_1^k(x)= g(x).
\end{align*}
 {Thus, the facts III and IV and the first part of the fact II are proven for the case when the feasible $x$ has $\LO(x)<n-k$.}

 {Noting that $\LO(x)+k-i=n-k+k-i=n-i$ for $\LO(x)=n-k$ w.r.t. the fact II,} this lemma is proved.
\end{proof}

Now we have the following runtime result.
\begin{theorem}
Let $i\in[0..k]$ and $\eps_i=n-k+i$. Consider using the randomized local search algorithm to maximize $g$ defined as in (\ref{eq:epflo}) with $\eps=\eps_i$ and $r>1$. Then after $O(n^2)$ iterations in expectation, the optimal solution $1^{n-i}0^i$ is reached.
\label{thm:rls}
\end{theorem}
\begin{proof}
    We pessimistically assume that $x$ starts from an infeasible solution. From Lemma~\ref{lem:infealo}, we know that the flip  {in} any position that makes the offspring survive will not decrease the value of $h_2^k$.  {We consider the flip in the $j$-th position, where 
    \begin{align*}
    \begin{cases}
        j=n-\TZ(x_{[n-k+1..n]}), & \text{ if } x\notin S^* \text{ and }\TZ(x_{[n-k+1..n]})< k,\\
        j= \LO(x)+1, & \text{ if } x\notin S^* \text{ and }\TZ(x_{[n-k+1..n]}) = k,\\
        j=\LO(x), & \text{ if } x\in S^*.
    \end{cases}
    \end{align*}
    Also from Lemma~\ref{lem:infealo}, we know that such a flip} will strictly increase the value of $g$, and also strictly increase the value of $h_2^k$. 
    Since the probability to flip a specific bit is $1/n$, we know that an improvement of $h_2^k$ will be witnessed in  {at most} $n$ expected number of iterations (function evaluations). Hence, to reach a feasible solution, that is, a solution with its $h_2^k$ value at least $\eps_i=n-k+i$, we need at most $n(n-k+i)$ expected number of function evaluations.  

    Once a feasible solution is reached, from Lemma~\ref{lem:fealo} we know that all individuals in future iterations will be feasible  {(that is,} $g=h_1^k${), and $h_1^k(=g)$} does not decrease. Also from Lemma~\ref{lem:fealo}, only the flip  {in the} $(\LO(x)+1)$-th bit for the current individual $x$ will result in the improvement of $h_1^k(=g)$. 
    Hence, with the probability of $1/n$ to flip the $(\LO(x)+1)$-th bit, we know that in expected $n$ iterations (function evaluations), an improvement of $h_1^k$ will be witnessed. As the unique optimum is reached when $h_1^k=n-i$ (from Lemma~\ref{lem:fealo} (1)), we know that the optimum is reached in  {at most} $n(n-i)$  {expected} number of function evaluations.

    Hence,  {this theorem is proved}.
\end{proof}

\subsection{Efficiency of the MOEAs}
For this brief discussion, we only consider the non-decomposition MOEAs ((G)SEMO, \NSGA, and \SMS) as examples to see how the MOEAs maximize $\lotz_k$. The following theorem gives the runtime result.

\begin{theorem}
Consider using (G)SEMO, SMS-EMOA with $\mu \ge k+1$, or the NSGA-II with $N\ge 4(k+1)$ to optimize the $\lotz_k$ problem. Then the expected number of function evaluations to cover the full Pareto front is $O(\max\{k,1\}n^2)$ for (G)SEMO, $O(\mu n^2)$ for the SMS-EMOA, and $O(Nn^2)$ for the NSGA-II.
\label{thm:moeaslo}
\end{theorem}
\begin{proof}[Proof sketch]
    The proof idea is similar to the one for the $\omm_k$ problem. Here, we consider two phases of the optimization process. The first phase starts from the initial generation and ends when $1^n$ is reached for the first time. The second phase starts right after the end of the first phase and ends when the entire Pareto front is covered. 
    
    For the first phase, for some iteration $t$ let $h_{1,\max}^{k}=\max_{z\in P_t}\{h_1^k(z)\}$ be the maximal $h_1^k$ value in the population $P_t$.  {Note that Lemma~\ref{lem:sur} holds} for all problems where the maximal number of mutually incomparable solutions is $k+1$, thus also holds for $\lotz_k$. Hence, $h_{1,\max}^{k}$ will not decrease, and will reach its maximum of $n$ iff $1^n$ is reached. We pessimistically consider the following case when a strict improvement happens. In one iteration, one individual $x$ in  {$A_t:=\{z\mid h_1^k(z)=h_{1,\max}^{k}\}$}  is selected as a parent (for the \NSGA we further require $\cDis(x)=+\infty$) and generates an offspring $y$ with $h_1^k(y)> h_{1,\max}^{k}$ (which is easily reached when only the $(h_{1,\max}^{k}+1)$-th bit of $x$ is flipped). Then  {from Lemma~\ref{lem:sur}, we know that in the next generation there is a $z$ such that $z\succeq y$, thus $h_1^k(z)\ge h_1^k(y)> h_{1,\max}^{k}$. Also from}
    Lemma~\ref{lem:sur}, we know that $A_s$ for $s>t$ will not be empty before an improvement of $h_{1,\max}^{k}$ is witnessed. 
    Noting that the probability of generating such a $y$ when $x$ is chosen for mutation is $\Omega(1/n)$, together with the probability of choosing $x$ as a parent (discussed in Lemmas~\ref{lem:parent} and~\ref{lem:firstP}), we know that the expected number of function evaluations to see an improvement of $h_{1,\max}^{k}$ is $O(kn)$ for the GSEMO, $O(\mu n)$ for the \SMS, and $O(N n)$ for the \NSGA. Hence, to reach $h_{1,\max}^{k}=n$, that is, to reach $1^n$ as discussed before, the expected number of function evaluations that we need is $O(kn^2)$ for the GSEMO, $O(\mu n^2)$ for the \SMS, and $O(N n^2)$ for the \NSGA. 

    For the second phase, we pessimistically consider of sequentially generating $1^{n-1}0,\dots,1^{n-k}0^k$. Similar to the above discussion, we only need to focus on selecting $1^{i+1}0^{n-i-1}$ as a parent to generate $1^{i}0^{n-i}$ with $i\in[n-k..n-1]$. As there are at most $k$ remaining missing Pareto front points when $1^n$ is reached for the first time, we know that the additional expected number of evaluations to cover the full Pareto front is $O(k^2n)$ for the GSEMO, $O(\mu nk)$ for the \SMS, and $O(N nk)$ for the \NSGA.
\end{proof}

\section{Conclusion}\label{sec:con}
With the benchmark class $\omm_k$ to depict different degrees of conflict between two objectives, we proved that the scalarization approach has difficulty in covering the full Pareto front, and that $\eps$-constraint approach covers the full Pareto front in expected $O(\max\{k,1\}n\ln n)$ function evaluations but the solving method and the parameter settings need to be carefully chosen. 

In contrast {to a good distribution of the set of parameter $\eps$ and the property penalty coefficient $r$ for the exterior penalty function way to successfully solve the corresponding $\eps$-constraint problems}, the MOEAs (including the basic (G)SEMO, and the modern \NSGA, \SMS, and MOEA-D) can cover the full Pareto front in expected $O(\max\{k,1\}n\ln n)$ function evaluations {with less effort. The \NSGA and \SMS (the MOEA/D) require, instead of a good distribution,} the population size {(the number of subproblems)} at least a constant factor larger than the maximal size of incomparable set. This showed that the MOEAs easily achieve at least the same asymptotic runtime as the  {non-MOEA approaches}, but without careful reformulation of the bi-objective problem. This also showed that even in the extreme case where two objectives are the same, the MOEAs have the same asymptotic runtime as solving the corresponding single-objective problem. 
{We also saw the similar findings on a bi-objective \lo variant.} It is the first work that theoretically compares the performance of the MOEAs and non-MOEA methods.   

{This paper demonstrated that the reasonable runtime of the MOEAs without decomposition is essentially guaranteed by two properties: (1) the ability to retain all reached Pareto front points and (2) an easy way to generate new Pareto front points from the already reached ones. For bi-objective optimization, both properties are easily satisfied: the dominance-based criterion in the survival selection, and the standard variation operator ensuring the reachability of all solutions. The reasonable runtime for the MOEA/D stems from a high-quality decomposition and a good runtime for each resulting single-objective subproblem. It is an interesting future research question about the comparison on the approximation guarantees when the reached Pareto front needs to be discarded for the MOEAs without decomposition, or when the decomposition employed by MOEA/D poorly aligns with the Pareto front. Besides, the additional essential reason, in our discussions, for these MOEAs with the same runtime as the non-MOEAs with the deliberate designs on the parameters, lies in the fact that the maximal size of mutually incomparable solutions equals the degree of conflict. Whether this fact holds or not depends on the problems to solve. Hence, it is also an interesting research topic to discuss other types of problems to represent different degrees of conflict, and the performance comparison among the MOEAs and the non-MOEAs.}


\section*{Acknowledgments}
This work was supported by National Natural Science Foundation of China (Grant No. 62306086) and Guangdong Basic and Applied Basic Research Foundation (Grant No. 2025A1515011936).


\newcommand{\etalchar}[1]{$^{#1}$}

\end{document}